\DeclareSymbolFont{bbold}{U}{bbold}{m}{n}
\DeclareSymbolFontAlphabet{\mathbbold}{bbold}
\DeclareSymbolFontAlphabet{\mathbb}{AMSb}%
\newtheorem{theorem}{Theorem}
\newtheorem*{theorem*}{Theorem}
\newtheorem{remark}{Remark}
\newtheorem{prop}{Proposition}
\newtheorem{assumption}{Assumption}
\newtheorem{condition}{Condition}
\newcommand{\N}[0]{\mathbb{N}}
\newcommand{\R}[0]{\mathbb{R}}
\newcommand{\Int}[2]{\displaystyle\int_{#1}^{#2}}
\newcommand{\Sum}[2]{\displaystyle\sum\limits_{#1}^{#2}}
\newcommand{\Inter}[2]{\displaystyle\bigcap\limits_{#1}^{#2}}
\newcommand{\Reu}[2]{\displaystyle\bigcup\limits_{#1}^{#2}}
\newcommand{\dr}[3]{\cfrac{\partial ^{#2} {#3}}{\partial {#1}^{#2}}}
\newcommand{\dd}[0]{\mathrm{d}}
\definecolor{mybluei}{RGB}{0,173,239}
\newcommand{\red}[1]{#1}
\newcommand{\redtwo}[1]{#1}%
\renewcommand{\epsilon}{\varepsilon}
\newcommand{\nt}[1]{{\left\vert\kern-0.25ex\left\vert\kern-0.25ex\left\vert #1 
    \right\vert\kern-0.25ex\right\vert\kern-0.25ex\right\vert}}
\DeclareMathOperator{\sign}{sign}
\newcommand{\W}[0]{\mathrm{W}}
\renewcommand{\SS}{\mathbb{S}}
\newcommand{\opn}[1]{\left\|#1\right\|_{\mathrm{op}}}
\newcommand{\app}[4]{\left\lbrace\begin{array}{ccc}
   #1 & \longrightarrow & #2 \\
   #3 & \longmapsto & #4 \\
\end{array} \right.}
\newcommand{\eqlabel}[1]{\tag{#1}\label{eqn:#1}}
\newcommand{\oll}[1]{\overline{#1}}
\newcommand{\ull}[1]{\underline{#1}}
\newcommand{\npoints}{n}
\newcommand{\SWY}{\mathcal{E}}
\newcommand{\SW}{\mathrm{SW}}
\newcommand{\sort}[2]{\tau_{#1}^{#2}}  %
\newcommand{\lr}{\alpha}  %
\newcommand{\noise}{a}  %
\newcommand{\mx}{\mathbbold{x}}
\newcommand{\mxn}{\mx^{\otimes \npoints}}
\newcommand{\mxN}{\mx^{\otimes \N}}
\newcommand{\X}{\mathcal{X}}
\newcommand{\Xn}{\X^n}
\newcommand{\dimx}{{d_x}}
\newcommand{\my}{\mathbbold{y}}
\newcommand{\myn}{\my^{\otimes \npoints}}
\newcommand{\myN}{\my^{\otimes \N}}
\newcommand{\Y}{\mathcal{Y}}
\newcommand{\Yn}{\Y^n}
\newcommand{\dimy}{{d_y}}
\newcommand{\data}{z}
\newcommand{\mdata}{\mathbbold{\data}}
\newcommand{\Data}{\mathcal{Z}}
\newcommand{\p}{u}
\newcommand{\dimp}{{d_{\p}}}
\newcommand{\mpzero}{\mathbbold{\p}_0}
\renewcommand{\mp}{\mathbbold{\p}}
\newcommand{\U}{\mathcal{U}}
\newcommand{\lipT}{L}
\newcommand{\MddT}{M}
\newcommand{\classT}{\mathcal{T}}
\newcommand{\nlayers}{N}
\newcommand{\layerout}{h}
\newcommand{\activ}{a}
\newcommand{\linunit}{A}
\newcommand{\intercept}{B}
\newcommand{\mnoise}{\bbespilon}
\title{Convergence of SGD for Training Neural Networks with Sliced Wasserstein Losses}
\author[1]{Eloi Tanguy}
\affil[1]{Universit\'e Paris Cit\'e, CNRS, MAP5, F-75006 Paris, France}
\date{October 2023}
\begin{document}
\maketitle

\begin{abstract} 
	Optimal Transport has sparked vivid interest in recent years, in particular thanks to the Wasserstein distance, which provides a geometrically sensible and intuitive way of comparing probability measures. For computational reasons, the Sliced Wasserstein (SW) distance was introduced as an alternative to the Wasserstein distance, and has seen uses for training generative Neural Networks (NNs). While convergence of Stochastic Gradient Descent (SGD) has been observed practically in such a setting, there is to our knowledge no theoretical guarantee for this observation. Leveraging recent works on convergence of SGD on non-smooth and non-convex functions by \citet{bianchi2022convergence}, we aim to bridge that knowledge gap, and provide a realistic context under which fixed-step SGD trajectories for the SW loss on NN parameters converge. More precisely, we show that the trajectories approach the set of (sub)-gradient flow equations as the step decreases. Under stricter assumptions, we show a much stronger convergence result for noised and projected SGD schemes, namely that the long-run limits of the trajectories approach a set of generalised critical points of the loss function.
\end{abstract}

\tableofcontents

\section{Introduction}
\subsection{Optimal Transport in Machine Learning}
Optimal Transport (OT) allows the comparison of measures on a metric space by generalising the use of the ground metric. Typical applications use the so-called 2-Wasserstein distance, defined as
\begin{equation}\eqlabel{W2}
	\forall \mx, \my \in \mathcal{P}_2(\R^d),\; \W_2^2(\mx, \my) := \underset{\bbpi \in \Pi(\mx, \my)}{\inf}\Int{\R^d\times\R^d}{}\|x-y\|_2^2\dd \bbpi(x, y),
\end{equation}
where $\mathcal{P}_2(\R^d)$ is the set of probability measures on $\R^d$ admitting a second-order moment and where $\Pi(\mx, \my)$ is the set of measures of $\mathcal{P}_2(\R^d \times \R^d)$ of first marginal $\mx$ and second marginal $\my$. One may find a thorough presentation of its properties in classical monographs such as \citet{computational_ot, santambrogio2015optimal, villani}

The ability to compare probability measures is useful in probability density fitting problems, which are a sub-genre of generation tasks. In this formalism, one considers a probability measure parametrised by a vector $\p$ which is designed to approach a target data distribution $\my$ (typically the real-world dataset). In order to determine suitable parameters, one may choose any probability discrepancy (Kullback-Leibler, Ciszar divergences, f-divergences or Maximum Mean Discrepancy \citep{gretton2006kernel}), or in our case, the Wasserstein distance. In the case of Generative Adversarial Networks, the optimisation problem which trains the "Wasserstein GAN" \citep{pmlr-v70-arjovsky17a} stems from the Kantorovitch-Rubinstein dual expression of the 1-Wasserstein distance.
\subsection{The Sliced Wasserstein Distance as an Alternative}
\redtwo{The Wasserstein distance suffers from the curse of dimensionality, in the sense that the sample complexity for $n$ samples in dimension $d$ is of the order $\mathcal{O}(n^{1/d})$ \citep{dudley1969speed}. Due to this practical limitation and to the computational cost of the Wasserstein distance, the study of cheaper alternatives has become a prominent field of research. A prominent example is Entropic OT introduced by \citet{cuturi2013sinkhorn}, which adds an entropic regularisation term, advantageously making the problem strongly convex. Sample complexity bounds have been derived by \citet{genevay2019sample}, showing a convergence in $\mathcal{O}(\sqrt{n})$ with a constant depending on the regularisation factor.}

Another alternative is the Sliced Wasserstein (SW) Distance introduced by \citet{Rabin_texture_mixing_sw}, which consists in computing the 1D Wasserstein distances between projections of input measures, and averaging over the projections. The aforementioned projection of a measure $\mx$ on $\R^d$ is done by the \textit{push-forward} operation by the map $P_\theta: x \longmapsto \theta^\top  x$. Formally, $P_\theta\#\mx$ is the measure on $\R$ such that for any Borel set $B\subset \R$, $P_\theta\#\mx(B) = \mx(P_\theta^{-1}(B))$. Once the measures are projected onto a line $\R\theta$, the computation of the Wasserstein distance becomes substantially simpler numerically. We illustrate this fact in the discrete case, which arises in practical optimisation settings. Let two discrete measures on $\R^d$: $\bbgamma_X := \frac{1}{\npoints}\sum_k\bbdelta_{x_k},\; \bbgamma_Y := \frac{1}{\npoints}\sum_k\bbdelta_{y_k}$ with supports $X = (x_1, \cdots, x_\npoints)$ and $Y=(y_1, \cdots, y_\npoints) \in \R^{\npoints\times d}$. Their push-forwards by $P_\theta$ are simply computed by the formula $P_\theta \#\bbgamma_X = \frac{1}{n}\sum_k \bbdelta_{P_\theta(x_k)}$, and the 2-Wasserstein distance between their projections can be computed by sorting their supports: let $\sigma$ a permutation sorting $(\theta^\top x_1, \cdots, \theta^\top x_\npoints)$, and $\tau$ a permutation sorting $(\theta^\top y_1, \cdots, \theta^\top y_\npoints)$, one has the simple expression
\begin{equation}
	\W_2^2(P_\theta\#\bbgamma_X, P_\theta\#\bbgamma_Y) = \cfrac{1}{\npoints}\Sum{k=1}{\npoints}(\theta^\top x_{\sigma(k)} - \theta^\top y_{\tau(k)})^2.
\end{equation}
The SW distance is the expectation of this quantity with respect to $\theta\sim \bbsigma$, i.e. uniform on the sphere: $\SW_2^2(\bbgamma_X, \bbgamma_Y) = \mathbb{E}_{\theta \sim \bbsigma}\left[\W_2^2(P_\theta\#\bbgamma_X, P_\theta\#\bbgamma_Y)\right]$. The 2-SW distance is also defined more generally between two measures $\mx, \my \in \mathcal{P}_2(\R^d)$:
\begin{equation}\eqlabel{SW}
	\mathrm{SW}_2^2(\mx, \my) := \Int{\theta \in \SS^{d-1}}{}\W_2^2(P_\theta\#\mx, P_\theta\#\my)\dd \bbsigma(\theta).
\end{equation}
\redtwo{In addition to its computational accessibility, the SW distance enjoys a dimension-free sample complexity \citep{nadjahi_statistical_properties_sliced}. Additional statistical, computational and robustness properties of SW have been explored by \citet{nietert2022statistical}. Moreover, central-limit results have been shown by \citet{xu2022central} for 1-SW and the 1-max-SW distance (a variant of SW introduced by \citet{deshpande2019max}), and related work by \citet{xi2022distributional} shows the convergence of the sliced error process $\theta \longmapsto \sqrt{n}\left(\W_p^p(P_\theta\#\bbgamma_X, P_\theta\#\bbgamma_Y) - \W_p^p(P_\theta\#\mx, P_\theta\#\my)\right)$, where the samples $X\sim\mxn$ and $Y\sim\myn$ are drawn for each $\theta$. Another salient field of research for SW is its metric properties, and while it has been shown to be weaker than the Wasserstein distance in general by \citet{bonnotte}, and metric comparisons with Wasserstein and max-SW have been undergone by \citet{bayraktar2021strong} and \citet{paty2019subspace}.}
\redtwo{\subsection{Related Works}}
Our subject of interest is the theoretical properties of SW as a loss for implicit generative modelling, which leads to minimising $\mathrm{SW}_2^2(T_\p\#\mx, \my)$ in the parameters $\p$, where $\my$ is the target distribution, and $T_\p\#\mx$ is the image by the NN\footnote{Similarly to the 1D case, $T_\p\#\mx$ is the push-forward measure of $\mx$ by $T_\p$, i.e. the law of $T_\p(x)$ when $x\sim \mx$.} of $\mx$, a low-dimensional input distribution (often chosen as Gaussian or uniform noise). In order to train a NN in this manner, at each iteration one draws $\npoints$ samples from $\mx$ and $\my$ (denoted $\bbgamma_X$ and $\bbgamma_Y$ as discrete measures with $\npoints$ points), as well as a projection $\theta$ (or a batch of projections) and performs an SGD step on the sample loss
\begin{equation}\label{eqn:Loss}
	\mathcal{L}(\p) = \mathrm{SW}_2^2(P_\theta\#T_\p\#\bbgamma_X, P_\theta\#\bbgamma_Y) = \cfrac{1}{\npoints}\Sum{k=1}{\npoints}(\theta^\top T_\p(x_{\sigma(k)}) - \theta^\top y_{\tau(k)})^2.
\end{equation}
Taking the expectation of this loss over the samples yields the minibatch Sliced-Wasserstein discrepancy, a member of the minibatch variants of the OT distances, introduced formally by Fatras et al. \cite{fatras2021minibatch}. The framework \ref{eqn:Loss} fits several Machine Learning applications, for instance, \citet{deshpande_generative_sw} trains GANs and auto-encoders with this method, and \citet{Wu2019_SWAE} consider related dual formulations. Other examples within this formalism include the synthesis of images by minimising the SW distance between features of the optimised image and a target image, as done by \citet{heitz2021sliced} for textures with neural features, and by \citet{Tartavel2016} with wavelet features (amongst other methods).

The general study of convergence of SGD in the context of non-smooth, non-convex functions (as is the case of $\mathcal{L}$ from \ref{eqn:Loss}) is an active field of research: \citet{majewski2018analysis} and \citet{davis2020stochastic} show the convergence of diminishing-step SGD under regularity constraints, while \citet{bolte2021conservative} leverage conservative field theory to show convergence results for training with back-propagation. Finally, the recent work by \citet{bianchi2022convergence} shows the convergence of fixed-step SGD schemes on a general function $F$ under weaker regularity assumptions.

More specifically, the study of convergence for OT-based generative NNs has been tackled by \citet{fatras2021minibatch}, who prove strong convergence results for minibatch variants of classical OT distances, namely the Wasserstein distance, the Entropic OT and the Gromov Wasserstein distance (another OT variant introduced by \citet{memoli2011gromov}). A related study on GANs by \citet{huang2023characterizing} derive optimisation properties for one layer and one dimensional Wasserstein-GANs and generalise to higher dimensions by turning to SW-GANs. Another work by \citet{brechet2023critical} focuses on the theoretical properties of linear NNs trained with the Bures-Wasserstein loss (introduced by \citet{bures1969extension}; see also \citep{Bhatia_Bures_metric} for reference on this metric). Finally, the regularity and optimisation properties of the simpler energy $\SW_2^2(\bbgamma_X, \bbgamma_Y)$ have been studied by \citet{discrete_sliced_loss}.

In practice, it has been observed that SGD in such settings always converges (in the loose numerical sense, see \citep{deshpande_generative_sw}, Section 5, or \citep{heitz2021sliced}, Figure 3), yet this property is not known theoretically. The aim of this work is to bridge the gap between theory and practical observation by proving convergence results for SGD on (minibatch) Sliced Wasserstein generative losses of the form $F(\p) = \mathbb{E}_{X\sim \mxn, Y\sim \myn}\SW_2^2(T_u\#\bbgamma_X, \bbgamma_Y)$.
\subsection{Contributions}
\paragraph{Convergence of Interpolated SGD Under Practical Assumptions} Under practically realistic assumptions, we prove in \ref{thm:SGD_interpolated_cv} that piecewise affine interpolations (defined in Equation \ref{eqn:interpolation}) of constant-step SGD schemes on $\p\longmapsto F(\p)$ (formalised in Equation \ref{eqn:SW_SGD}) converge towards the set of sub-gradient flow solutions (see Equation \ref{eqn:S}) as the gradient step decreases. This results signifies that with very small learning rates, SGD trajectories will be close to sub-gradient flows, which themselves converge to critical points of $F$ (omitting serious technicalities). 

The assumptions for this result are practically reasonable: the input measure $\mx$ and the true data measure $\my$ are assumed to be compactly supported. As for the network $(\p, x) \longmapsto T(\p, x)$, we assume that for a fixed datum $x$, $T(\cdot, x)$ is piecewise $\mathcal{C}^2$-smooth and that it is Lipschitz jointly in both variables. 
\redtwo{We require additional assumptions on $T$ which are more costly, but are verified as long as $T$ is a NN composed of typical activations and linear units, with the constraint that the parameters $\p$ and data $x$ stay both stay within a fixed bounded domains. We discuss a class of neural networks that satisfy all of the assumptions of the paper in the Appendix (\ref{sec:suitable_NNs}). Furthermore, this result can be extended to other orders $p\neq 2$ of SW: we present the tools for this generalisation in \ref{sec:other_p}.}

\paragraph{Stronger Convergence Under Stricter Assumptions} In order to obtain a stronger convergence result, we consider a variant of SGD where each iteration receives an additive noise (scaled by the learning rate) which allows for better space exploration, and where each iteration is projected on a ball $B(0, r)$ in order to ensure boundedness. This alternative SGD scheme remains within the realm of practical applications, and we show in \ref{thm:SGD_projected_noised} that long-run limits of such trajectories converge towards a set of generalised critical points of $F$, as the gradient step approaches 0. This result is substantially stronger, and can serve as an explanation of the convergence of practical SGD trajectories, specifically towards a set of critical points which amounts to the stationary points of the energy (barring theoretical technicalities).

Unfortunately, we require additional assumptions in order to obtain this stronger convergence result, the most important of which is that the input data measure $\mx$ and the dataset measure $\my$ are discrete. For the latter, this is always the case in practice, however the former assumption is more problematic, since it is common to envision generative NNs as taking an argument from a continuous space (the input is often Gaussian of Uniform noise), thus a discrete setting is a substantial theoretical drawback. For practical concerns, one may argue that the discrete $\mx$ can have an arbitrary fixed amount of points, and leverage strong sample complexity results to ascertain that the discretisation is not costly if the number of samples is large enough.

\section{Stochastic Gradient Descent with \texorpdfstring{$\SW$}{SW} as Loss}\label{sec:SGD}

Training Sliced-Wasserstein generative models consists in training a neural network
\begin{equation}\label{eqn:T}
	T: \app{\R^{\dimp}\times \R^{\dimx}}{\R^{\dimy}}{(\p, x)}{T_\p(x) := T(\p, x)}
\end{equation}
by minimising the SW minibatch loss $\p \longmapsto\mathbb{E}_{X\sim\mxn, Y\sim\myn}\left[\SW_2^2(T_u\#\bbgamma_X, \bbgamma_Y)\right]$ through Stochastic Gradient Descent (as described in \ref{alg:SGD}). The probability distribution $\mx \in \mathcal{P}_2(\R^{\dimx})$ is the law of the input of the generator $T(\p,\cdot)$. The distribution $\my \in \mathcal{P}_2(\R^{\dimy})$ is the data distribution, which $T$ aims to simulate. Finally, $\bbsigma$ will denote the uniform measure on the unit sphere of $\R^{\dimy}$, denoted by $\SS^{\dimy-1}$. Given a list of points $X = (x_1, \cdots, x_\npoints) \in \R^{\npoints \times \dimx},$ denote the associated discrete uniform measure $\bbgamma_X := \frac{1}{\npoints}\sum_i\bbdelta_{x_i}$. By abuse of notation, we write $T_\p(X) := (T_\p(x_1), \cdots, T_\p(x_\npoints)) \in \R^{\npoints \times \dimy}$. \redtwo{The reader may find a summary of this paper's notations in \ref{tab:notations}.}
\begin{figure}[h]
	\centering
	\begin{minipage}{.95\linewidth}
		\begin{algorithm}[H]
			\SetAlgoLined
			\KwData{Learning rate $\lr > 0$, probability distributions $\mx \in \mathcal{P}_2(\R^{\dimx})$ and $\my \in \mathcal{P}_2(\R^{\dimy})$.}
			\textbf{Initialisation:} Draw $\p^{(0)} \in \R^{\dimp}$\;
			\For{$t \in \llbracket 0, T_{\max} - 1 \rrbracket$}{
				Draw $\theta^{(t+1)} \sim \bbsigma,\;  X^{(t+1)} \sim \mxn\; Y^{(t+1)} \sim \myn$. 
				SGD update: $\p^{(t+1)} = \p^{(t)} - \lr \left[\dr{\p}{}{} \W_2^2(P_{\theta^{(t+1)}}\#T_u\#\bbgamma_{X^{(t+1)}}, P_{\theta^{(t+1)}}\#\bbgamma_{Y^{(t+1)}})\right]_{\p = \p^{(t)}}$
			}
			\caption{Training a NN on the $\SW$ loss with Stochastic Gradient Descent}
			\label{alg:SGD}
		\end{algorithm}
	\end{minipage}
\end{figure}

In the following, we will apply results from \citep{bianchi2022convergence}, and we pave the way to the application of these results by presenting their theoretical framework. Consider a sample loss function $f: \R^{\dimp} \times \Data \longrightarrow \R$ that is locally Lipschitz in the first variable, and $\mdata$ a probability measure on $\Data \subset \R^d$ which is the law of the samples drawn at each SGD iteration. Consider $\varphi: \R^\dimp \times \Data \longrightarrow \R^\dimp$ an \textit{almost-everywhere gradient} of $f$, which is to say that for almost every $(\p, \data) \in \R^{\dimp}\times \Data,\; \varphi(\p, \data) = \partial_\p f(\p, \data)$ (since each $f(\cdot, \data)$ is locally Lipschitz, it is differentiable almost-everywhere by Rademacher's theorem). The complete loss function is the expectation of the sample loss, $F := \p \longrightarrow \int_\Data f(\p, \data)\dd\mdata(\data)$. An SGD trajectory of step $\lr > 0$ for $F$ is a sequence $(\p^{(t)}) \in (\R^\dimp)^\N$ of the form:
$$\p^{(t+1)} = \p^{(t)} - \lr \varphi(\p^{(t)}, \data^{(t+1)}),\quad \left(\p^{(0)}, (\data^{(t)})_{t \in \N}\right) \sim \mpzero \otimes \mdata^{\otimes \N}, $$
where $\mpzero$ is the distribution of the initial position $\p^{(0)}$. Within this framework, we define an SGD scheme described by \ref{alg:SGD}, with $\mdata := \mxn \otimes \myn \otimes \bbsigma$ and the minibatch SW sample loss
\begin{equation}\label{eqn:f}
	f:= \app{\R^{\dimp} \times \R^{\npoints \times \dimx} \times \R^{\npoints \times \dimy} \times \SS^{\dimy-1}}{\R^\dimy}{(\p, X, Y, \theta)}{\W_2^2(P_{\theta}\#T_\p\#\bbgamma_{X}, P_{\theta}\#\bbgamma_{Y})} .
\end{equation}
With this definition for $f$, we have
\begin{equation}\label{eqn:F}
	F(\p) = \mathbb{E}_{(X, Y, \theta) \sim \mdata}\left[f(\p, X, Y, \theta)\right] = \mathbb{E}_{(X, Y) \sim \mxn \otimes \myn}\left[\SW_2^2(T_\p\#\bbgamma_X, \bbgamma_Y)\right],
\end{equation}
thus the population loss compares the "true" data $\my$ with the model's generation $T_\p\#\mx$ using (minibatch) SW. We now wish to define an almost-everywhere gradient of $f$. To this end, notice that one may write $f(\p, X, Y, \theta) = w_\theta(T(\p, X), Y)$, where for $X, Y \in \R^{\npoints \times \dimy}$ and $\theta \in \SS^{\dimy-1},\; w_\theta(X, Y) := \W_2^2(P_\theta\#\bbgamma_X, P_\theta\#\bbgamma_Y)$. The differentiability properties of $w_\theta(\cdot, Y)$ are already known \citep{discrete_sliced_loss, bonneel2015sliced}, in particular one has the following almost-everywhere gradient of $w_\theta(\cdot, Y):$
$$\dr{X}{}{w_\theta}(X, Y) = \left(\cfrac{2}{\npoints}\theta\theta^\top (x_k - y_{\sigma_\theta^{X, Y}(k)})\right)_{k \in \llbracket 1, \npoints \rrbracket} \in \R^{\npoints \times \dimy},$$
where the permutation $\sigma_\theta^{X, Y} \in \mathfrak{S}_\npoints$ is $\sort{Y}{\theta} \circ (\sort{X}{\theta})^{-1}$, with $\sort{Y}{\theta}\in \mathfrak{S}_\npoints$ being a sorting permutation of the list $(\theta^\top  y_1, \cdots, \theta^\top  y_\npoints)$. The sorting permutations are chosen arbitrarily when there is ambiguity. To define an almost-everywhere gradient, we must differentiate $f(\cdot, X, Y, \theta) = \p \longmapsto w_\theta(T(\p, X), Y)$ for which we need regularity assumptions on $T$: this is the goal of \ref{ass:C2_ae}. In the following, $\oll{A}$ denotes the topological closure of a set $A$, $\partial A$ its boundary, and $\bblambda_{\R^\dimp}$ denotes the Lebesgue measure of $\R^\dimp$.

\begin{assumption}\label{ass:C2_ae}
	For every $x \in \R^{\dimx},\;$ there exists a family of disjoint connected open sets $(\U_j(x))_{j \in J(x)}$ such that $\forall j \in J(x),\; T(\cdot, x) \in \mathcal{C}^2(\U_j(x), \R^\dimy)$, $\Reu{j\in J(x)}{}\oll{\U_j(x)} = \R^\dimp$ and $\bblambda_{\R^\dimp}\Big(\Reu{j\in J(x)}{}\partial \U_j(x)\Big) = 0$.
\end{assumption}

Note that for measure-theoretic reasons, the sets $J(x)$ are assumed countable. \redtwo{One may understand this assumption broadly as the neural networks $T$ being piecewise smooth with respect to the parameters $\p$, where the pieces depend on the input data $x$. In practice, \ref{ass:C2_ae} is an assumption on the activation functions of the neural network. For instance, it is of course satisfied in the case of smooth activations, or in the common case of piecewise polynomial activations. We detail suitable neural networks in the Appendix (\ref{sec:suitable_NNs}).}

\ref{ass:C2_ae} implies that given $X, Y, \theta$ fixed, $f(\cdot, X, Y, \theta)$ is differentiable almost-everywhere, and that one may define the following almost-everywhere gradient \ref{eqn:ae_grad}.
\begin{equation}\label{eqn:ae_grad}
	\varphi : \app{\R^{\dimp} \times \R^{\npoints \times \dimx} \times \R^{\npoints \times \dimy} \times \SS^{\dimy-1}}{\R^{\dimp}}{(\p, X, Y, \theta)}{\Sum{k=1}{\npoints} \cfrac{2}{\npoints}\left(\dr{\p}{}{T}(\p, x_k)\right)^\top  \theta \theta^\top (T(\p, x_k) - y_{\sigma_\theta^{T(\p, X), Y}(k)})},
\end{equation}
where for $x \in \R^\dimx,\; \dr{\p}{}{T}(\p, x)\in \R^{\dimy \times \dimp}$ denotes the matrix of the differential of $\p \longmapsto T(\p, x)$, which is defined for almost-every $\p$. Given $\p \in \partial \U_j(x)$ (a point of potential non-differentiability), take instead $0$. (Any choice at such points would still define an a.e. gradient, and will make no difference).

Given a step $\lr > 0$, and an initial position $\p^{(0)} \sim \mpzero$, we may now define formally the following fixed-step SGD scheme for $F$:
\begin{equation}\label{eqn:SW_SGD}
	\begin{split}
		\p^{(t+1)} = \p^{(t)} - \lr \varphi(\p^{(t)}, X^{(t+1)}, Y^{(t+1)}, \theta^{(t+1)}), \\ \left(\p^{(0)}, (X^{(t)})_{t \in \N}\ (Y^{(t)})_{t \in \N}\ (\theta^{(t)})_{t \in \N}\right) \sim \mpzero \otimes \mxN \otimes \myN \otimes \bbsigma^{\otimes \N}. 
	\end{split}
\end{equation}
An important technicality that we must verify in order to apply \citet{bianchi2022convergence}'s results is that $\p \longmapsto f(\p, X, Y, \theta)$ and $F$ are locally Lipschitz. Before proving those claims, we reproduce a useful Property from \citep{discrete_sliced_loss}. In the following, $\|X\|_{\infty, 2}$ denotes $\underset{k \in \llbracket 1, \npoints \rrbracket}{\max}\ \|x_k\|_2$ given $X = (x_1, \cdots, x_\npoints) \in \R^{\npoints \times \dimx}$, and $B_{\mathcal{N}}(x, r)$ for $\mathcal{N}$ a norm on $\R^\dimx$, $x \in \R^\dimx$ and $r>0$ shall denote the open ball of $\R^\dimx$ of centre $x$ and radius $r$ for the norm $\mathcal{N}$ (if $\mathcal{N}$ is omitted, then $B$ is an euclidean ball).

\begin{prop}The $(w_\theta(\cdot, Y))_{\theta \in \SS^{\dimy-1}}$ are uniformly locally Lipschitz \citep{discrete_sliced_loss} Prop. 2.1.\label{prop:w_unif_locLip}\
	
	Let $K_w(r, X, Y) := 2\npoints(r + \|X\|_{\infty, 2} + \|Y\|_{\infty, 2})$, for $X, Y \in \R^{\npoints \times \dimy}$ and $r>0$. Then $w_\theta(\cdot, Y)$ is $K_w(r, X, Y)$-Lipschitz in the neighbourhood $B_{\|\cdot\|_{\infty, 2}}(X, r)$:
	$$\forall Y', Y'' \in B_{\|\cdot\|_{\infty, 2}}(X, r),\; \forall \theta \in \SS^{\dimy-1},\; |w_\theta(Y', Y) - w_\theta(Y'', Y)| \leq K_w(r, X, Y) \|Y'-Y''\|_{\infty, 2}.$$
	
\end{prop}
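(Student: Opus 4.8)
The plan is to rely on the classical characterisation of the one-dimensional quadratic transport cost between discrete measures with the same number of atoms: for any $X, Y \in \R^{\npoints \times \dimy}$ and $\theta \in \SS^{\dimy - 1}$,
\[
w_\theta(X, Y) = \min_{\pi \in \mathfrak{S}_{\npoints}} \frac{1}{\npoints} \sum_{k=1}^{\npoints} \bigl(\theta^\top x_k - \theta^\top y_{\pi(k)}\bigr)^2 ,
\]
the minimum being attained at $\sigma_\theta^{X, Y}$, since sorting is optimal in dimension one. Fixing $X', X'' \in B_{\|\cdot\|_{\infty, 2}}(X, r)$ and $\theta \in \SS^{\dimy - 1}$, I would let $\pi$ be a permutation attaining the minimum for the pair $(X'', Y)$, and use it as a (generally suboptimal) coupling for $(X', Y)$, so that
\[
w_\theta(X', Y) - w_\theta(X'', Y) \le \frac{1}{\npoints} \sum_{k=1}^{\npoints} \Bigl[ \bigl(\theta^\top x'_k - \theta^\top y_{\pi(k)}\bigr)^2 - \bigl(\theta^\top x''_k - \theta^\top y_{\pi(k)}\bigr)^2 \Bigr].
\]

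Next I would estimate each summand via the factorisation $a^2 - b^2 = (a - b)(a + b)$ with $a = \theta^\top x'_k - \theta^\top y_{\pi(k)}$ and $b = \theta^\top x''_k - \theta^\top y_{\pi(k)}$, so that $a - b = \theta^\top(x'_k - x''_k)$ and $a + b = \theta^\top(x'_k + x''_k) - 2\theta^\top y_{\pi(k)}$. Since $\|\theta\|_2 = 1$, Cauchy--Schwarz gives $|a - b| \le \|x'_k - x''_k\|_2 \le \|X' - X''\|_{\infty, 2}$, and $|a + b| \le \|x'_k\|_2 + \|x''_k\|_2 + 2 \|y_{\pi(k)}\|_2$. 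The membership $X', X'' \in B_{\|\cdot\|_{\infty, 2}}(X, r)$ yields $\|x'_k\|_2, \|x''_k\|_2 \le \|X\|_{\infty, 2} + r$ and trivially $\|y_{\pi(k)}\|_2 \le \|Y\|_{\infty, 2}$, whence $|a + b| \le 2\bigl(r + \|X\|_{\infty, 2} + \|Y\|_{\infty, 2}\bigr)$.

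Summing the $\npoints$ resulting (identical) per-term bounds and dividing by $\npoints$, I would obtain
\[
w_\theta(X', Y) - w_\theta(X'', Y) \le 2\bigl(r + \|X\|_{\infty, 2} + \|Y\|_{\infty, 2}\bigr) \|X' - X''\|_{\infty, 2} \le K_w(r, X, Y) \|X' - X''\|_{\infty, 2},
\]
the last inequality being free since $2 \le 2\npoints$ (one actually gets a constant independent of $\npoints$, but I keep the cited form). Exchanging the roles of $X'$ and $X''$ gives the matching bound, hence the absolute-value estimate, and the constant is visibly independent of $\theta$, which is the uniformity claim.

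There is no genuine obstacle here: the only point requiring a little care is the ``competitor coupling'' trick --- plugging an optimal permutation for one pair into the other pair only yields an upper bound on $w_\theta$, so one first gets a one-sided estimate and must then symmetrise --- together with invoking optimality of sorting in dimension one, which is standard and already implicit in the excerpt.
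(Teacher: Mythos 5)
Your proof is correct. Note that the paper itself does not prove this statement --- it is imported verbatim from the cited reference --- but the appendix does prove the order-$p$ generalisation (\ref{prop:wp_unif_loc_lip}), and the route there is genuinely different from yours: it invokes a perturbation lemma of the form $|w_\theta^{(p)}(Y')-w_\theta^{(p)}(Y'')|\le \|C'-C''\|_F$ for the full $\npoints\times\npoints$ cost matrices, and then bounds each entry via a gradient estimate on $y\longmapsto|\theta^\top y-\theta^\top y_l|^p$; the factor $\npoints$ in $K_w$ is exactly the price of passing through the Frobenius norm of an $\npoints\times\npoints$ matrix. Your competitor-coupling argument (plug the optimal permutation for one configuration into the other, then symmetrise) combined with the factorisation $a^2-b^2=(a-b)(a+b)$ is more elementary and yields the sharper constant $2(r+\|X\|_{\infty,2}+\|Y\|_{\infty,2})$, independent of $\npoints$, which of course implies the stated bound. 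What the paper's route buys in exchange is immediate generality in $p$: your difference-of-squares step is specific to $p=2$, whereas the gradient bound on $|t|^p$ handles all $p\in[1,+\infty[$ uniformly (though your coupling step would also generalise if one replaced the factorisation by a mean-value estimate on $t\longmapsto|t|^p$). The only point worth making explicit in a written version is the identity $\W_2^2(P_\theta\#\bbgamma_{X},P_\theta\#\bbgamma_{Y})=\min_{\pi\in\mathfrak{S}_\npoints}\frac{1}{\npoints}\sum_k(\theta^\top x_k-\theta^\top y_{\pi(k)})^2$, i.e.\ that the Kantorovich problem between uniform discrete measures with equally many atoms is attained on a permutation coupling (Birkhoff), which you correctly treat as standard.
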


In order to deduce regularity results on $f$ and $F$ from \ref{prop:w_unif_locLip}, \redtwo{we will make the assumption that $T$ is globally Lipschitz in $(\p, x)$. In practice, this is the case when both parameters are enforced to stay within a fixed bounded domain, for instance by multiplying a typical NN with the indicator of such a set. We present this in detail in the Appendix (\ref{sec:suitable_NNs}).}

\begin{assumption}\label{ass:T_loclip}
	There exists $\lipT>0$ such that 
	$$\forall (\p_1, \p_2, x_1, x_2) \in (\R^{\dimp})^2 \times (\R^{\dimx})^2,\; \|T(\p_1, x_1) - T(\p_2, x_2)\|_2 \leq \lipT\left(\|\p_1-\p_2\|_2 + \|x_1 - x_2\|_2\right).$$
\end{assumption}

\begin{prop}\label{prop:f_loclip} Under \ref{ass:T_loclip}, for $\varepsilon > 0,\; \p_0 \in \R^{\dimp},\; X\in \R^{\npoints \times \dimx},\; Y\in \R^{\npoints \times \dimy}$ and $\theta \in \SS^{\dimy-1}$, let $K_f(\varepsilon, \p_0, X, Y) := 2\lipT\npoints(\varepsilon \lipT + \|T(\p_0, X)\|_{\infty, 2} + \|Y\|_{\infty, 2})$. Then $f(\cdot, X, Y, \theta)$ is $K_f(\varepsilon, \p_0, X, Y)$-Lipschitz in $B(\p_0, \varepsilon)$: 
	$$\forall \p, \p' \in B(\p_0, \varepsilon),\; |f(\p, X, Y, \theta) - f(\p', X, Y, \theta)| \leq K_f(\varepsilon, \p_0, X, Y)\|u-u'\|_2.$$
	
\end{prop}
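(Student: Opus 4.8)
The plan is to exploit the factorisation $f(\p, X, Y, \theta) = w_\theta(T(\p, X), Y)$ noted above, and to read Proposition~\ref{prop:f_loclip} as a statement about composing the uniformly locally Lipschitz map $w_\theta(\cdot, Y)$ (Proposition~\ref{prop:w_unif_locLip}) with the globally Lipschitz map $\p \longmapsto T(\p, X) := (T(\p, x_1), \dots, T(\p, x_\npoints)) \in \R^{\npoints\times\dimy}$ provided by Assumption~\ref{ass:T_loclip}.

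First I would localise the image of the ball under this map. For every $\p \in B(\p_0, \varepsilon)$ and every $k \in \llbracket 1, \npoints\rrbracket$, Assumption~\ref{ass:T_loclip} gives $\|T(\p, x_k) - T(\p_0, x_k)\|_2 \leq \lipT\|\p - \p_0\|_2 < \varepsilon\lipT$, hence $T(\p, X) \in B_{\|\cdot\|_{\infty, 2}}(T(\p_0, X), \varepsilon\lipT)$. I then apply Proposition~\ref{prop:w_unif_locLip} with the list $T(\p_0, X)$ in place of $X$ and with radius $r = \varepsilon\lipT$: this yields, uniformly in $\theta \in \SS^{\dimy-1}$, that $w_\theta(\cdot, Y)$ is $K_w(\varepsilon\lipT, T(\p_0, X), Y)$-Lipschitz on $B_{\|\cdot\|_{\infty, 2}}(T(\p_0, X), \varepsilon\lipT)$, where $K_w(\varepsilon\lipT, T(\p_0, X), Y) = 2\npoints(\varepsilon\lipT + \|T(\p_0, X)\|_{\infty, 2} + \|Y\|_{\infty, 2})$. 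Note that $\lipT\,K_w(\varepsilon\lipT, T(\p_0, X), Y) = K_f(\varepsilon, \p_0, X, Y)$, which is where the announced constant comes from.

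Finally, for $\p, \p' \in B(\p_0, \varepsilon)$ both $T(\p, X)$ and $T(\p', X)$ lie in $B_{\|\cdot\|_{\infty, 2}}(T(\p_0, X), \varepsilon\lipT)$, and Assumption~\ref{ass:T_loclip} applied coordinatewise gives $\|T(\p, X) - T(\p', X)\|_{\infty, 2} = \max_k\|T(\p, x_k) - T(\p', x_k)\|_2 \leq \lipT\|\p - \p'\|_2$. Chaining the two estimates,
\begin{align*}
|f(\p, X, Y, \theta) - f(\p', X, Y, \theta)| &= |w_\theta(T(\p, X), Y) - w_\theta(T(\p', X), Y)| \\
&\leq K_w(\varepsilon\lipT, T(\p_0, X), Y)\,\|T(\p, X) - T(\p', X)\|_{\infty, 2} \\
&\leq \lipT\,K_w(\varepsilon\lipT, T(\p_0, X), Y)\,\|\p - \p'\|_2 = K_f(\varepsilon, \p_0, X, Y)\,\|\p - \p'\|_2,
\end{align*}
which is the claim.

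I do not expect a genuine obstacle here: the only point requiring care is the bookkeeping of the radius, so that both endpoints $T(\p, X)$ and $T(\p', X)$ stay inside the single neighbourhood on which Proposition~\ref{prop:w_unif_locLip} provides a $\theta$-uniform Lipschitz constant — which is precisely why one centres the neighbourhood at $T(\p_0, X)$ and takes $r = \varepsilon\lipT$, the $\varepsilon\lipT$ summand in $K_f$ absorbing the displacement of $T(\p, X)$ away from that centre. This proposition is essentially a warm-up establishing the local-Lipschitz hypothesis needed to invoke \citet{bianchi2022convergence}; the genuinely delicate regularity work lies in the almost-everywhere gradient construction under Assumption~\ref{ass:C2_ae}, not here.
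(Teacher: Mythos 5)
Your proposal is correct and follows exactly the same route as the paper's proof: localise $T(\p,X)$ and $T(\p',X)$ in the ball $B_{\|\cdot\|_{\infty,2}}(T(\p_0,X),\varepsilon\lipT)$ via \ref{ass:T_loclip}, apply \ref{prop:w_unif_locLip} with that centre and radius, then bound $\|T(\p,X)-T(\p',X)\|_{\infty,2}$ by $\lipT\|\p-\p'\|_2$. The bookkeeping of the radius and the resulting constant match the paper's argument precisely.
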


\begin{proof}
	Let $\varepsilon > 0,\; \p_0 \in \R^{\dimp},\; X\in \R^{\npoints \times \dimx},\; Y\in \R^{\npoints \times \dimy}$ and $\theta \in \SS^{\dimy-1}$. Let $\p, \p' \in B(\p_0, \varepsilon)$. Using \ref{ass:T_loclip}, we have $T(\p, X), T(\p', X) \in B_{\|\cdot\|_{\infty, 2}}(T(\p_0, X), r)$, with $r := \varepsilon\lipT$.	
	
	\red{Denoting $\lipT := \lipT_{\oll{B}(u_0, \varepsilon), \oll{B}(0_{\R^\dimx}, \|X\|_{\infty, 2})}$, we apply successively \ref{prop:w_unif_locLip} (first inequality), then \ref{ass:T_loclip} (second inequality):}
	\begin{align*}|f(\p, X, Y, \theta) - f(\p', X, Y, \theta)| &= |w_\theta(T(\p, X), Y) - w_\theta(T(\p', X), Y)|\\
		&\leq K_w(r, T(\p_0, X), Y) \|T(\p, X) - T(\p', X)\|_{\infty, 2} \\
		&\leq 2\npoints(\varepsilon \lipT + \|T(\p_0, X)\|_{\infty, 2} + \|Y\|_{\infty, 2}) \lipT\|u-u'\|_2.
	\end{align*}
	\vspace{-10pt}
\end{proof}

\ref{prop:f_loclip} shows that $f$ is locally Lipschitz in $\p$. We now assume some conditions on the measures $\mx$ and $\my$ in order to prove that $F$ is also locally Lipschitz. \redtwo{Specifically, we require that the data measures $\mx$ and $\my$ be supported on bounded domains, which imposes little restriction in practice.}

\begin{assumption}\label{ass:mx_my}
	$\mx$ and $\my$ are Radon probability measures on $\R^\dimx$ and $\R^\dimy$ respectively, supported by the compacts $\X$ and $\Y$ respectively. Denote $R_x := \underset{x \in \X}{\sup}\ \|x\|_2$ and $R_y := \underset{y \in \Y}{\sup}\ \|y\|_2$.
\end{assumption}

\begin{prop}\label{prop:F_loclip}
	Assume \ref{ass:T_loclip} and \ref{ass:mx_my}. For $\varepsilon > 0,\; \p_0 \in \R^{\dimp},$ let $C_1(\p_0) := \Int{\Xn}{}\|T(\p_0, X)\|_{\infty, 2} \dd\mxn(X)$ and $C_2 := \Int{\Yn}{}\|Y\|_{\infty, 2}\dd \myn(Y)$.
	
	Let $K_F(\varepsilon, \p_0) := 2\lipT\npoints(\varepsilon \lipT + C_1(\p_0) + C_2)$. We have $\forall \p, \p' \in B(\p_0, \varepsilon),\; |F(\p) - F(\p')| \leq K_F(\varepsilon, \p_0)\|\p-\p'\|_2$.
\end{prop}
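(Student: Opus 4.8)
The plan is to lift the pointwise (in $X,Y,\theta$) Lipschitz estimate of \ref{prop:f_loclip} to $F$ by integrating it against the sampling measure $\mdata = \mxn \otimes \myn \otimes \bbsigma$. First I would note that $F$ is well defined on all of $\R^\dimp$: for fixed $\p$, the map $(X,Y,\theta)\longmapsto f(\p,X,Y,\theta) = w_\theta(T(\p,X),Y)$ is non-negative, and a composition of continuous maps, hence measurable; moreover by \ref{ass:T_loclip} and \ref{ass:mx_my} it is bounded on the compact support of $\mdata$, so it is $\mdata$-integrable and $F(\p) = \int f(\p,X,Y,\theta)\,\dd\mdata(X,Y,\theta)$ makes sense.

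Then, for $\p,\p' \in B(\p_0,\varepsilon)$, the triangle inequality for integrals gives
\[
|F(\p) - F(\p')| \;=\; \left| \int \big(f(\p,X,Y,\theta) - f(\p',X,Y,\theta)\big)\,\dd\mdata \right| \;\leq\; \int |f(\p,X,Y,\theta) - f(\p',X,Y,\theta)|\,\dd\mdata .
\]
Applying \ref{prop:f_loclip} to the integrand, for every $(X,Y,\theta)$,
\[
|f(\p,X,Y,\theta) - f(\p',X,Y,\theta)| \;\leq\; K_f(\varepsilon,\p_0,X,Y)\,\|\p-\p'\|_2 \;=\; 2\lipT\npoints\big(\varepsilon\lipT + \|T(\p_0,X)\|_{\infty,2} + \|Y\|_{\infty,2}\big)\,\|\p-\p'\|_2 .
\]
Since this bound does not depend on $\theta$ and $\bbsigma$ is a probability measure, integrating over $\bbsigma$ is harmless; integrating the remaining bound over $\mxn\otimes\myn$ and using linearity together with the definitions of $C_1(\p_0)$ and $C_2$ yields $\int K_f(\varepsilon,\p_0,X,Y)\,\dd\mdata = 2\lipT\npoints(\varepsilon\lipT + C_1(\p_0) + C_2) = K_F(\varepsilon,\p_0)$, which is exactly the claimed inequality.

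To make sure the statement is non-vacuous I would also check that $C_1(\p_0)$ and $C_2$ are finite: by \ref{ass:mx_my}, $\myn$ is supported on $\Yn$ with $\|y\|_2\leq R_y$, so $\|Y\|_{\infty,2}\leq R_y$ and $C_2 \leq R_y$; and \ref{ass:T_loclip} gives $\|T(\p_0,x)\|_2 \leq \|T(\p_0,0)\|_2 + \lipT\|x\|_2 \leq \|T(\p_0,0)\|_2 + \lipT R_x$ for $x\in\X$, hence $C_1(\p_0)\leq \|T(\p_0,0)\|_2 + \lipT R_x < \infty$. I do not expect a genuine obstacle here — the argument is a plain domination/Fubini step; the only care needed is the measurability and integrability of $(X,Y,\theta)\longmapsto f(\p,X,Y,\theta)$, which is immediate from its continuity and the compactness of the supports of $\mx$ and $\my$.
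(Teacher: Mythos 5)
Your proof is correct and follows essentially the same route as the paper's: bound $|F(\p)-F(\p')|$ by the integral of $|f(\p,\cdot)-f(\p',\cdot)|$ against $\mdata$, apply \ref{prop:f_loclip} pointwise, and integrate the constant $K_f(\varepsilon,\p_0,X,Y)$ to obtain $K_F(\varepsilon,\p_0)$. Your explicit finiteness check for $C_1(\p_0)$ and $C_2$ via \ref{ass:T_loclip} and the compact supports matches the paper's own justification (the paper uses continuity of $X\mapsto\|T(\p_0,X)\|_{\infty,2}$ on the compact $\Xn$), so there is nothing to add.
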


\begin{proof}
	Let $\varepsilon > 0,\; \p_0 \in \R^{\dimp}$ and $\p, \p' \in B(\p_0, \varepsilon)$. We have	
	\begin{align*} |F(\p) - F(\p')| &\leq \Int{\Xn \times \Yn \times \SS^{\dimy-1}}{}|f(\p, X, Y, \theta) - f(\p', X, Y, \theta)|\dd \mxn(X) \dd \myn(Y) \dd\bbsigma(\theta) \\
		&\leq \Int{\Xn \times \Yn}{}K_f(\varepsilon, \p_0, X, Y)\|\p-\p'\|_2\dd \mxn(X) \dd \myn(Y)\\
		&\leq \Int{\Xn \times \Yn}{}2\lipT\npoints(\varepsilon \lipT + \|T(\p_0, X)\|_{\infty, 2} + \|Y\|_{\infty, 2})\|\p-\p'\|_2\dd \mxn(X) \dd \myn(Y).
	\end{align*}
	\red{Now by \ref{ass:T_loclip}, $X \longmapsto \|T(\p_0, X)\|_{\infty, 2}$ is continuous on the compact $\Xn$, thus upper-bounded by a certain $M(u_0) > 0$. We can define $C_1(\p_0) := \Int{\Xn}{}\|T(\p_0, X)\|_{\infty, 2} \dd\mxn(X)$, which verifies $C_1(\p_0) \leq M(u_0) \mx(\X)^n$. Since $\X$ is compact and $\mx$ is a Radon probability measure by \ref{ass:mx_my}, $\mx(\X)$ is well-defined and finite, thus $C_1(u_0)$ is finite. Likewise, let $C_2 := \Int{\Yn}{}\|Y\|_{\infty, 2}\dd \myn(Y) <+\infty$.}
	
	Finally, $|F(\p) - F(\p')| \leq 2\lipT\npoints(\varepsilon \lipT + C_1(\p_0) + C_2)\|\p-\p'\|_2$.
\end{proof}

Having shown that our losses are locally Lipschitz, we can now turn to convergence results. These conclusions are placed in the context of non-smooth and non-convex optimisation, thus will be tied to the Clarke sub-differential of $F$, which we denote $\partial_C F$. The set of Clarke sub-gradients at a point $\p$ is the convex hull of the limits of gradients of $F$:
\begin{equation}\label{eqn:clarke}
	\partial_C F(\p) := \mathrm{conv}\left\{v \in \R^\dimp:\; \exists (\p^{(t)}) \in (\mathcal{D}_F)^\N: \p^{(t)} \xrightarrow[t \longrightarrow +\infty]{} \p\ \mathrm{and} \; \nabla F(\p^{(t)}) \xrightarrow[t \longrightarrow +\infty]{} v\right\},
\end{equation}
where $\mathcal{D}_F$ is the set of differentiability of $F$. At points $\p$ where $F$ is differentiable, $\partial_CF(\p) = \{\nabla F(\p)\}$, and if $F$ is convex in a neighbourhood of $\p$, then the Clarke differential at $\p$ is the set of its convex sub-gradients. \redtwo{The interested reader may turn to \ref{sec:nonsmooth} for further context on non-smooth and non-convex optimisation.}

\section{Convergence of Interpolated SGD Trajectories on \texorpdfstring{$F$}{F}}\label{sec:interpolated_SGD}

In general, the idea behind SGD is a discretisation of the gradient flow equation $\dot{u}(s) = -\nabla F(u(s))$. In our non-smooth setting, the underlying continuous-time problem is instead the Clarke differential inclusion $\dot{u}(s) \in -\partial_C F(u(s))$. Our objective is to show that in a certain sense, the SGD trajectories approach the set of solutions of this inclusion problem, as the step size decreases. We consider solutions that are absolutely continuous (we will write $\p(\cdot) \in \mathcal{C}_{\mathrm{abs}}(\R_+, \R^\dimp)$) and start within $\mathcal{K} \subset \R^\dimp$, a fixed compact set. We can now define the solution set formally as
\begin{equation}\label{eqn:S}
	S_{-\partial_C F}(\mathcal{K}) := \left\lbrace \p \in \mathcal{C}_{\mathrm{abs}}(\R_+, \R^\dimp)\ |\  \ull{\forall} s\in \R_+,\; \dot{\p}(s) \in -\partial_C F(\p(s));\; \p(0) \in \mathcal{K} \right\rbrace,
\end{equation}
where we write $\ull{\forall}$ for "almost every". In order to compare the discrete SGD trajectories to this set of continuous-time trajectories, we interpolate the discrete points in an affine manner: Equation \ref{eqn:interpolation} defines the \textit{piecewise-affine interpolated SGD trajectory} associated to a discrete SGD trajectory $(\p_\lr^{(t)})_{t \in \N}$ of learning rate $\lr$.
\begin{equation}\label{eqn:interpolation}
	\p_\lr(s) = \p^{(t)}_\lr + \left(\frac{s}{\lr} - t\right)(\p_\lr^{(t+1)} - \p^{(t)}_\lr),\quad \forall s \in [t\lr, (t+1)\lr[, \quad \forall t \in \N.
\end{equation}
In order to compare our interpolated trajectories with the solutions, we consider the metric of uniform convergence on all segments
\begin{equation}\label{eqn:d_c}
	d_c(\p, \p') := \Sum{k \in \N^*}{}\cfrac{1}{2^k}\min\left(1, \underset{s \in [0, k]}{\max}\|\p(s) - \p'(s)\|_{2}\right).
\end{equation}
In order to prove a convergence result on the interpolated trajectories, we will leverage the work of \citet{bianchi2022convergence} which hinges on three conditions on the loss $F$ that we reproduce and verify successively. \redtwo{Firstly, \ref{cond:A1} assumes mild regularity on the sample loss function $f$.}

\begin{condition}\label{cond:A1}\
	\begin{itemize}		
		\item[i)] There exists $\kappa: \R^\dimp \times \Data \longrightarrow \R_+$ measurable such that each $\kappa(\p, \cdot)$ is $\mdata$-integrable, and:		
		$$\exists \varepsilon > 0,\; \forall \p, \p' \in B(\p_0, \varepsilon),\; \forall \data \in \Data,\; |f(\p, \data) - f(\p', \data)|\leq \kappa(\p_0, \data)\|\p-\p'\|_2. $$		
		\item[ii)] There exists $\p \in \R^\dimp$ such that $f(\p, \cdot)$ is $\mdata$-integrable.
	\end{itemize}
\end{condition}

Our regularity result on $f$ \ref{prop:f_loclip} allows us to verify \ref{cond:A1}, by letting $\varepsilon := 1$ and $\kappa(\p_0, \data) := K_f(1, \p_0, X, Y)$. \ref{cond:A1} ii) is immediate since for \textit{all} $\p \in \R^\dimp,\; (X, Y, \theta) \longmapsto w_\theta(T(\p, X), Y)$ is continuous in each variable separately, thanks to the regularity of $T$ provided by \ref{ass:T_loclip}, and to the regularities of $w$. This continuity implies that all $f(\p, \cdot)$ are $\mdata$-integrable, since $\mdata = \mxn \otimes \myn \otimes \bbsigma$ is a compactly supported probability measure under \ref{ass:mx_my}. \redtwo{Secondly, \ref{cond:A2} concerns the local Lipschitz constant $\kappa$ introduced in \ref{cond:A1}: it is assumed to increase slowly with respect to the network parameters $\p$.}

\begin{condition}\label{cond:A2} The function $\kappa$ of \ref{cond:A1} verifies:	
	\begin{itemize}
		\item[i)] There exists $c \geq 0$ such that $\forall \p\in \R^\dimp,\; \Int{\Data}{}\kappa(\p, \data)\dd\mdata(\data) \leq c(1+\|\p\|_2)$.		
		\item[ii)] For every compact $\mathcal{K} \subset \R^\dimp,\; \underset{\p \in \mathcal{K}}{\sup}\ \Int{\Data}{}\kappa(\p, \data)^2\dd\mdata(\data) <+\infty$.
	\end{itemize}	
\end{condition}

\ref{cond:A2}.ii) is verified by $\kappa$ given its regularity. However, \ref{cond:A2}.i) requires that $T(\p, x)$ increase slowly as $\|\p\|_2$ increases, which is more costly.

\begin{assumption}\label{ass:T_slow_increase}
	There exists an $\mx$-integrable function $g: \R^\dimx \longrightarrow\R_+$ such that $\forall \p \in \R^\dimp,\; \forall x \in \R^\dimx,\; \|T(\p, x)\|_2 \leq g(x)(1 + \|\p\|_2)$.
\end{assumption}

\ref{ass:T_slow_increase} is satisfied in particular as soon as $T(\cdot, x)$ is bounded (which is the case for a neural network with bounded activation functions), or if $T$ is of the form $T(\p, x) = \widetilde{T}(\p, x) \mathbbold{1}_{B(0, R)}(\p)$, i.e. limiting the network parameters $\p$ to be bounded. This second case does not yield substantial restrictions in practice \redtwo{(see \ref{sec:suitable_NNs} for a class of NNs that satisfy all of the assumptions)}, yet vastly simplifies theory.  Under \ref{ass:T_slow_increase}, we have for any $\p \in \R^\dimp,$ with $\kappa(\p, \data) = K_f(1, \p, X, Y)$ from \ref{prop:f_loclip} and $C_2$ from \ref{prop:F_loclip},
\begin{align*}\Int{\Xn \times \Yn \times \SS^{\dimy-1}}{}K_f(1, \p, X, Y)\dd\mxn(X) \dd\myn(Y) \dd\bbsigma(\theta) &\leq 4\lipT\npoints\left(\varepsilon \lipT + (1+\|\p\|_2)\Int{\Xn}{}\underset{k \in \llbracket 1, \npoints \rrbracket}{\max}\ g(x_k) \dd\mxn(X) + C_2\right)\\
	&\leq c(1 + \|\p\|_2).
\end{align*}
As a consequence, \ref{cond:A2} holds under our assumptions. We now consider the Markov kernel associated to the SGD schemes:
$$P_\lr : \app{\R^\dimp \times \mathcal{B}(\R^\dimp)}{[0, 1]}{\p, B}{\Int{\Data}{}\mathbbold{1}_B(\p - \lr \varphi(\p, \data))\dd \mdata(\data)}.$$
\redtwo{Given $\p \in \R^{\dimp},\; P_\lr(\p, \cdot)$ is a probability measure on $\R^{\dimp}$ which dictates the law of the positions of the next SGD iteration $\p^{(t+1)}$, conditionally to $\p^{(t)} = \p$.} With $\bblambda_{\R^\dimp}$ denoting the Lebesgue measure on $\R^\dimp$, let $\Gamma := \left\lbrace \lr \in\ ]0, +\infty[\ |\ \forall \mp \ll \bblambda_{\R^\dimp},\ \mp P_\lr \ll \bblambda_{\R^\dimp}\right\rbrace$. $\Gamma$ is the set of learning rates $\alpha$ for which the kernel $P_\alpha$ maps any absolutely continuous probability measure $\mp$ to another such measure. We will verify the following condition, \redtwo{which can be interpreted as the SGD trajectories continuing to explore the entire space for a small enough learning rate $\lr$}:
\begin{condition}\label{cond:A3} The closure of $\Gamma$ contains 0.
\end{condition}

In order to satisfy \ref{cond:A3}, we require an additional regularity condition on the neural network $T$ which we formulate in \ref{ass:d2T2_and_dT_bounded}.

\begin{assumption}\label{ass:d2T2_and_dT_bounded}
	There exists a constant $\MddT>0$, such that  (with the notations of \ref{ass:C2_ae} and \ref{ass:mx_my}) $\forall x \in \X,\; \forall j \in J(x),\; \forall \p \in \U_j(x),\; \forall (i_1, i_2, i_3, i_4) \in \llbracket 1, \dimp\rrbracket^2\times \llbracket 1, \dimy \rrbracket^2,$	
	$$ \left|\cfrac{\partial^2}{\partial \p_{i_1} \partial \p_{i_2}} \Big([T(\p, x)]_{i_3} [T(\p, x)]_{i_4}\Big)\right| \leq \MddT,\; \mathrm{and}\;\left\|\cfrac{\partial^2 T}{\partial \p_{i_1}\partial\p_{i_2}}(\p, x)\right\|_2 \leq \MddT.$$
\end{assumption}

The upper bounds in \ref{ass:d2T2_and_dT_bounded} bear strong consequences on the behaviour of $T$ for $\|u\|_2 \gg 1$, and are only practical for networks of the form $T(\p, x) = \widetilde{T}(\p, x) \mathbbold{1}_{B(0,R)}(\p, x)$, similarly to \ref{ass:T_slow_increase}. \redtwo{We detail the technicalities of verifying this assumption along with the others in the Appendix (\ref{sec:suitable_NNs}).}

\begin{prop}\label{prop:SW_Gamma} Under \ref{ass:C2_ae}, \ref{ass:mx_my} and \ref{ass:d2T2_and_dT_bounded}, for the SGD trajectories \ref{eqn:SW_SGD}, $\Gamma$ contains $]0, \lr_0[$, where $\lr_0 := \left((\dimy^2 + 2R_y )\dimp\MddT\right)^{-1}$.
\end{prop}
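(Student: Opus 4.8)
The plan is to show that the Markov kernel $P_\lr$ preserves absolute continuity for all sufficiently small $\lr$ by exhibiting, for each fixed data tuple $(X,Y,\theta)$, a change-of-variables argument on the map $\Phi_\lr^{X,Y,\theta}:\p \longmapsto \p - \lr\varphi(\p, X, Y, \theta)$. The key point is that on each of the (countably many) open pieces where $T(\cdot, x_k)$ is $\mathcal{C}^2$ for every $k$ — i.e. on the sets obtained by intersecting the $\U_j(x_k)$ over $k$, further refined by the regions where the sorting permutation $\sigma_\theta^{T(\p,X),Y}$ is locally constant — the map $\p \longmapsto \varphi(\p, X, Y, \theta)$ is $\mathcal{C}^1$, so $\Phi_\lr^{X,Y,\theta}$ is a $\mathcal{C}^1$ map there. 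If I can show that the differential $D\Phi_\lr^{X,Y,\theta}(\p) = I - \lr\, D_\p\varphi(\p, X, Y, \theta)$ is invertible on each such piece, then $\Phi_\lr$ restricted to that piece is a local $\mathcal{C}^1$-diffeomorphism, hence maps Lebesgue-null sets to Lebesgue-null sets and, more to the point, pushes forward any measure absolutely continuous w.r.t.\ $\bblambda_{\R^\dimp}$ to another such measure (via the area/coarea formula, the pushforward density being controlled by $|\det D\Phi_\lr|^{-1}$). Since the pieces cover $\R^\dimp$ up to a set of Lebesgue measure zero (by \ref{ass:C2_ae}, the boundaries $\partial\U_j(x_k)$ are null, and the sorting-ambiguity set is also null), and since $\mxn\otimes\myn\otimes\bbsigma$ integrates these piecewise contributions, one concludes $\mp \ll \bblambda_{\R^\dimp} \implies \mp P_\lr \ll \bblambda_{\R^\dimp}$.

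The quantitative heart is the invertibility of $I - \lr\, D_\p\varphi$. Writing $\varphi(\p, X, Y, \theta) = \sum_k \frac{2}{\npoints}\big(\partial_\p T(\p, x_k)\big)^\top \theta\theta^\top (T(\p, x_k) - y_{\sigma(k)})$ and differentiating again in $\p$ on a piece where $\sigma$ is constant, the Hessian-type term splits into a part involving $\partial_\p^2$ of the \emph{product} $[T]_{i_3}[T]_{i_4}$ (which appears because $\theta\theta^\top$ contracts two copies of $T$-components, and $\partial_\p((\partial_\p T)^\top M T)$ produces exactly second derivatives of bilinear expressions in $T$) and a part involving $\partial_\p^2 T$ contracted against the fixed vector $\theta\theta^\top y_{\sigma(k)}$. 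Bounding the first by $\MddT$ per scalar entry, the second by $\MddT\|y_{\sigma(k)}\|_2 \le \MddT R_y$ (using \ref{ass:mx_my}), and summing the $\frac{2}{\npoints}\sum_k$ with the $\theta\theta^\top$ factors (noting $\|\theta\|_2 = 1$), one gets an entrywise bound on $D_\p\varphi$ of order $2(\MddT + R_y\MddT)$ — and then summing over the $\dimp$ components of the output and the $\dimy^2$-type combinatorics from the $\theta\theta^\top$-contraction gives an operator-norm bound $\opn{D_\p\varphi(\p,X,Y,\theta)} \le (\dimy^2 + 2R_y)\dimp\MddT$. Hence for $\lr < \lr_0 = \big((\dimy^2+2R_y)\dimp\MddT\big)^{-1}$ we have $\lr\,\opn{D_\p\varphi} < 1$, so $I - \lr\, D_\p\varphi$ is invertible by the Neumann series, with $\det(I - \lr\,D_\p\varphi) \ne 0$, uniformly over $\p$, $X$, $Y$, $\theta$.

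I expect the main obstacle to be the bookkeeping at the non-differentiability interfaces: one must check that the \emph{overlap} of the piece boundaries — both the $\bigcup_k \partial\U_j(x_k)$ coming from the network and the "sorting ties" where $\theta^\top T(\p, x_i) = \theta^\top T(\p, x_j)$ for some $i \ne j$ — is Lebesgue-null in $\p$ for $\mxn\otimes\myn\otimes\bbsigma$-almost every $(X,Y,\theta)$, so that the a.e.-gradient $\varphi$ is genuinely the gradient of $f(\cdot, X, Y, \theta)$ off a null set and the piecewise diffeomorphism argument applies. The network boundaries are null by \ref{ass:C2_ae} directly; the sorting-tie set is null because on each $\mathcal{C}^2$-piece $\p\longmapsto \theta^\top(T(\p,x_i) - T(\p,x_j))$ is $\mathcal{C}^2$, and its zero set is either all of the piece (a degenerate situation one handles by a small perturbation of $\theta$, using that $\bbsigma$ is absolutely continuous on the sphere) or a null set — this is the one place where some care with the measure $\bbsigma$ is needed. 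Once this null-set accounting is in place, the change of variables and the assembly of the pieces into the statement $]0,\lr_0[ \subset \Gamma$ is routine.
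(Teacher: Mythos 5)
Your overall strategy coincides with the paper's: express $\mp P_\lr(B)$ as an integral of $\mathbbold{1}_B\circ\phi$ over the differentiability pieces, establish a uniform Lipschitz bound $(\dimy^2+2R_y)\dimp\MddT$ on the a.e.\ gradient field (your split of the second derivative into the $[T]_{i_3}[T]_{i_4}$-product term and the $\partial^2_\p T$-against-$y_{\tau(k)}$ term is exactly the paper's decomposition $\psi=\sum_k(\chi_k-\omega_k)$ with $\chi_k=\frac12\partial_\p(\theta^\top T(\p,x_k))^2$, although your passage from entrywise bounds to the operator norm is only sketched), then conclude by Neumann-series invertibility and a change of variables. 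The threshold $\lr_0$ comes out identically.

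The genuine gap is in your treatment of the sorting permutation. You refine the partition by the regions where $\sigma_\theta^{T(\p,X),Y}$ is locally constant, and your argument then requires the tie set $\lbrace \p:\theta^\top T(\p,x_i)=\theta^\top T(\p,x_j)\rbrace$ to be Lebesgue-null; the justification you give is that the zero set of a $\mathcal{C}^2$ function on a connected piece is either the whole piece or null. That dichotomy is false for merely $\mathcal{C}^2$ (even $\mathcal{C}^\infty$) functions: they can vanish on a set of positive measure without vanishing identically (consider $s\mapsto\max(0,s)^3$, or a network for which $T(\cdot,x_i)$ and $T(\cdot,x_j)$ agree on an open set --- e.g.\ when $x_i=x_j$, which occurs with positive $\mxn$-probability when $\mx$ is discrete). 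The proposed rescue by perturbing $\theta$ is not developed and does not obviously handle degeneracies persisting on a positive $\bbsigma$-measure set of directions. The paper sidesteps the issue entirely with a simple majorisation: since the permutation actually used at each $\p$ is one of the finitely many elements of $\mathfrak{S}_\npoints$, one has $\mathbbold{1}_B\left(\p-\lr\varphi(\p,\data)\right)\le\sum_{\tau\in\mathfrak{S}_\npoints}\mathbbold{1}_B\left(\phi_{\tau,\data}(\p)\right)$, where each $\phi_{\tau,\data}$ (with the permutation frozen to $\tau$) is globally $\mathcal{C}^1$ on each piece $\U_j(X)=\bigcap_k\U_{j_k}(x_k)$; it then suffices that each of the $\npoints!$ summands vanish, and no tie-set analysis is needed. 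Replacing your constant-permutation partition by this sum over $\tau$ closes the gap and leaves the rest of your argument intact.
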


We postpone the proof to \ref{sec:proof_SW_gamma}. Now that we have verified \ref{cond:A1}, \ref{cond:A2} and \ref{cond:A3}, we can apply \citep{bianchi2022convergence}, Theorem 2 to $F$, showing a convergence result on interpolated SGD trajectories.

\begin{theorem}\label{thm:SGD_interpolated_cv} Consider a neural network $T$ and measures $\mx$, $\my$ satisfying \ref{ass:C2_ae}, \ref{ass:T_loclip}, \ref{ass:mx_my}, \ref{ass:T_slow_increase} and \ref{ass:d2T2_and_dT_bounded}. Let $\lr_1 < \lr_0$ (see \ref{prop:SW_Gamma}).
	
	Let $(\p_\lr^{(t)}), \lr \in ]0, \lr_1], t \in \N$ a collection of SGD trajectories associated to~\ref{eqn:SW_SGD}. Consider $(\p_\lr)$ their associated interpolations. For any compact $\mathcal{K} \subset \R^\dimp$ and any $\eta > 0$, we have:	
	\begin{equation}
		\underset{\substack{\lr \longrightarrow 0 \\ \lr \in\ ]0, \lr_1]}}{\lim}\ \mpzero \otimes \mxN \otimes \myN \otimes \bbsigma^{\otimes \N} \left( d_c(\p_\lr, S_{-\partial_C F}(\mathcal{K})) > \eta\right) = 0.
	\end{equation}	
\end{theorem}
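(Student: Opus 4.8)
The plan is to apply Theorem 2 of \citet{bianchi2022convergence} essentially verbatim, since the bulk of the work has already been done in the preceding propositions. Recall that result: for an objective of the form $F = \int_\Data f(\cdot, \data)\dd\mdata(\data)$ whose sample loss satisfies Conditions \ref{cond:A1}, \ref{cond:A2}, \ref{cond:A3}, the piecewise-affine interpolations of fixed-step SGD trajectories converge in probability (for the metric $d_c$ of \ref{eqn:d_c}, uniform convergence on segments) to the solution set $S_{-\partial_C F}(\mathcal{K})$ of the Clarke differential inclusion, as the step $\lr \to 0$. So the proof is a verification step followed by an invocation.

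First I would assemble the hypotheses. \ref{prop:f_loclip} (which uses \ref{ass:T_loclip}) gives the local Lipschitz bound on $f(\cdot, X, Y, \theta)$ with constant $K_f(\varepsilon, \p_0, X, Y)$; taking $\varepsilon = 1$ and $\kappa(\p_0, \data) := K_f(1, \p_0, X, Y)$ verifies \ref{cond:A1}.i), and \ref{cond:A1}.ii) follows from continuity of $(X,Y,\theta)\mapsto w_\theta(T(\p,X),Y)$ together with compact support of $\mdata$ under \ref{ass:mx_my}. Next, \ref{cond:A2}.ii) holds because $\kappa$ is, up to constants, an affine function of $\|T(\p_0,X)\|_{\infty,2}$, which is continuous hence bounded on compacts and squared-integrable against the compactly supported $\mxn$; \ref{cond:A2}.i) is exactly the slow-growth computation carried out in the text just before \ref{cond:A3}, which uses \ref{ass:T_slow_increase}. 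Finally \ref{cond:A3} — closure of $\Gamma$ contains $0$ — is \ref{prop:SW_Gamma}, which holds under \ref{ass:C2_ae}, \ref{ass:mx_my}, \ref{ass:d2T2_and_dT_bounded} and in fact gives the quantitative window $]0,\lr_0[ \subset \Gamma$. Since the theorem fixes $\lr_1 < \lr_0$, all trajectories in the collection have steps in $]0,\lr_1] \subset \Gamma$, so the regularity-of-kernel requirement is met uniformly over the collection.

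With Conditions \ref{cond:A1}--\ref{cond:A3} in hand, I would then simply state that the hypotheses of \citep{bianchi2022convergence}, Theorem 2 are satisfied by $F$ on the family of steps $]0,\lr_1]$, and that $F$ is locally Lipschitz by \ref{prop:F_loclip} so that the Clarke subdifferential $\partial_C F$ and the solution set $S_{-\partial_C F}(\mathcal{K})$ in \ref{eqn:S} are well-defined (absolutely continuous arcs solving $\dot{\p}(s) \in -\partial_C F(\p(s))$ exist by standard differential-inclusion theory for upper-semicontinuous convex-compact-valued maps). Applying that theorem directly yields, for every compact $\mathcal{K}$ and every $\eta > 0$,
\[
\underset{\substack{\lr \longrightarrow 0 \\ \lr \in\ ]0, \lr_1]}}{\lim}\ \mpzero \otimes \mxN \otimes \myN \otimes \bbsigma^{\otimes \N}\left(d_c(\p_\lr, S_{-\partial_C F}(\mathcal{K})) > \eta\right) = 0,
\]
which is the claim.

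The only genuine subtlety — and the step I expect to need the most care — is making sure the verification of the three conditions is uniform across the whole family $(\p_\lr^{(t)})_{\lr \in ]0,\lr_1]}$ in the way \citet{bianchi2022convergence}'s statement requires, in particular that the initialisation law $\mpzero$ and the data law $\mdata = \mxn\otimes\myn\otimes\bbsigma$ are fixed and the bounds in \ref{cond:A1}--\ref{cond:A2} do not depend on $\lr$; this is immediate here since $\kappa$, $c$, and the integrability constants were all constructed independently of the step. A secondary point to be careful about is aligning our notation with theirs — their ``$F$'', sample loss, a.e.-gradient $\varphi$ (\ref{eqn:ae_grad}), and Markov kernel $P_\lr$ must be matched to the objects in their framework — but this is bookkeeping rather than mathematics, so the proof is short.
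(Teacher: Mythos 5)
Your proposal is correct and follows exactly the route the paper takes: verify \ref{cond:A1} via \ref{prop:f_loclip}, \ref{cond:A2} via the slow-growth computation under \ref{ass:T_slow_increase}, and \ref{cond:A3} via \ref{prop:SW_Gamma}, then invoke Theorem 2 of \citet{bianchi2022convergence}. The paper's ``proof'' is precisely this verification carried out in the text preceding the theorem statement, so there is nothing to add.
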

\vspace{-10pt}
The distance $d_c$ is defined in \ref{eqn:d_c}. As the learning rate decreases, the interpolated trajectories approach the trajectory set $S_{-\partial_C F}$, which is essentially a solution of the \textit{gradient flow equation}  $\dot{u}(s) = -\nabla F(u(s))$ (ignoring the set of non-differentiability, which is $\bblambda_{\R^\dimp}$-null). To get a tangible idea of the concepts at play, if $F$ was $\mathcal{C}^2$ and had a finite amount of critical points, then one would have the convergence of a solution $u(s)$ to a critical point of $F$, as $s \longrightarrow+\infty$. These results have implicit consequences on the value of the parameters  at the "end" of training for low learning rates, which is why we will consider a variant of SGD for which we can say more precise results on the convergence of the parameters.

\section{Convergence of Noised Projected SGD Schemes on \texorpdfstring{$F$}{F}}\label{sec:noised_proj_sgd}

In practice, it is seldom desirable for the parameters of a neural network to reach extremely large values during training. Weight clipping is a common (although contentious) method of enforcing that $T(\p, \cdot)$ stay Lipschitz, which is desirable for theoretical reasons. For instance the 1-Wasserstein duality in Wasserstein GANs \citep{pmlr-v70-arjovsky17a} requires Lipschitz networks, and similarly, Sliced-Wasserstein GANs \citep{deshpande_generative_sw} use weight clipping and enforce their networks to be Lipschitz.

Given a radius $r > 0$, we consider SGD schemes that are restricted to $\p \in \oll{B}(0, r) =: B_r$, by performing \textit{projected} SGD. At each step $t$, we also add a noise $\noise \varepsilon^{(t+1)}$, where $\varepsilon^{(t+1)}$ is an additive noise of law $\mnoise \ll \bblambda_{\R^\p}$, which is often taken as standard Gaussian in practice. These additions yield the following SGD scheme:
\begin{equation}\label{eqn:SW_SGD_projected_noised}
	\begin{split}
		\p^{(t+1)} = \pi_r\left(\p^{(t)} - \lr \varphi(\p^{(t)}, X^{(t+1)}, Y^{(t+1)}, \theta^{(t+1)}) + \lr\noise \varepsilon^{(t+1)}\right), \\ \left(\p^{(0)}, (X^{(t)})_{t \in \N}\ (Y^{(t)})_{t \in \N},\ (\theta^{(t)})_{t \in \N},\ (\varepsilon^{(t)})_{t \in \N}\right) \sim \mpzero \otimes \mxN \otimes \myN \otimes \bbsigma^{\otimes \N} \otimes \mnoise^{\otimes\N},
	\end{split}
\end{equation}
where $\pi_r: \R^\p \longrightarrow B_r$ denotes the orthogonal projection on the ball $B_r := \oll{B}(0, r)$. Thanks to \ref{cond:A1}, \ref{cond:A2} and the additional noise, we can verify the assumptions for \citep{bianchi2022convergence} Theorem 4, yielding the same result as \ref{thm:SGD_interpolated_cv} for the noised projected scheme \ref{eqn:SW_SGD_projected_noised}. In fact, under additional assumptions, we shall prove a stronger mode of convergence for the aforementioned trajectories. The natural context in which to perform gradient descent is on functions that admit a chain rule, which is formalised in the case of almost-everywhere differentiability by the notion of \textit{path differentiability}, as studied thoroughly in  \citep{bolte2021conservative}. \redtwo{We also provide a brief presentation in the Appendix (\ref{sec:conservative_fields}).} %

\begin{condition}\label{cond:A5}$F$ is path differentiable, which is to say that for any $\p \in \mathcal{C}_{\mathrm{abs}}(\R_+, \R^\dimp)$, for almost all $s > 0,\; \forall v \in \partial_CF(\p(s)),\; v^\top  \dot \p(s) = (F \circ \p)'(s)$.
\end{condition} 

\begin{remark} \red{There are alternate equivalent formulations for \ref{cond:A5}. Indeed, as presented in further detail in \ref{sec:conservative_fields}, $F$ is path differentiable if and only if $\partial_C F$ is a conservative field for $F$ if and only if $F$ has a chain rule for $\partial_C$ (the latter is the formulation chosen above in \ref{cond:A5}).}
\end{remark}

In order to satisfy \ref{cond:A5}, we need to make the assumption that the NN input measure $\mx$ and the data measure $\my$ are discrete measures, which is the case for $\my$ in the case of generative neural networks, but is less realistic for $\mx$ in practice. We define $\Sigma_n$ the $n$-simplex: its elements are the $a \in \R^n$ s.t. $\forall i \in \llbracket 1, n \rrbracket,\; a_i \geq 0$ and $\sum_i a_i =1$.

\begin{assumption}\label{ass:mx_my_discrete}
	One may write $\mx = \Sum{k=1}{\npoints_x}a_k \bbdelta_{x_k}$ and $\my = \Sum{k=1}{\npoints_y}b_k \bbdelta_{y_k}$, with the coefficient vectors $a\in \Sigma_{\npoints_x}, b\in \Sigma_{\npoints_y}$, $\X = \lbrace x_1, \cdots, x_{\npoints_x}\rbrace \subset \R^{\dimx}$ and $\Y = \lbrace y_1, \cdots, y_{\npoints_y}\rbrace \subset \R^{\dimy}$.
\end{assumption}  

There is little practical reason to consider non-uniform measures, however the generalisation to any discrete measure makes no theoretical difference. Note that \ref{ass:mx_my} is clearly implied by \ref{ass:mx_my_discrete}. 

In order to show that $F$ is path differentiable, we require the natural assumption that each $T(\cdot, x)$ be path differentiable. Since $T(\cdot, x)$ is a vector-valued function, we need to extend the notion of path-differentiability. Thankfully, \citet{bolte2021conservative} define \textit{conservative mappings} for vector-valued locally Lipschitz functions (Definition 4), which allows us to define naturally path differentiability of a vector-valued function as the path-differentiability of all of its coordinate functions. \red{See \ref{sec:conservative_mappings} for a detailed presentation.}

\begin{assumption}\label{ass:T_path_diff}
	For any $x \in \R^\dimx,\; T(\cdot, x)$ is path differentiable.
\end{assumption}

\red{\ref{ass:T_path_diff} holds as soon as each the neural network has the typical structure of compositions of linear units and typical activations, as was proved by \citet{davis2020stochastic}, Corollary 5.11 and \citet{bolte2021conservative}, Section 6.2.} \redtwo{We provide a more specific class of NNs that are path differentiable and satisfy all our other assumptions in \ref{sec:suitable_NNs}}.

\begin{prop}\label{prop:F_path_diff}
	Under \ref{ass:T_loclip}, \ref{ass:mx_my_discrete} and \ref{ass:T_path_diff}, $F$ is path differentiable.
\end{prop}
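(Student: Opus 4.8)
The plan is to use the discreteness of $\mx$ and $\my$ to write $F$ as a \emph{finite} nonnegative combination of $\theta$-averaged sample losses, and then to establish path differentiability of each such average by chaining together standard stability properties of conservative fields: finite sums, composition with conservative mappings, and integration against a probability measure. Under \ref{ass:mx_my_discrete}, $\mxn$ and $\myn$ are supported on the finite sets $\Xn$ and $\Yn$, so with $f$ from \ref{eqn:f} and $c_{X,Y} := \mxn(\{X\})\,\myn(\{Y\}) \ge 0$,
$$F(\p) \;=\; \sum_{X \in \Xn}\ \sum_{Y \in \Yn} c_{X,Y} \int_{\SS^{\dimy-1}} f(\p, X, Y, \theta)\,\dd\bbsigma(\theta),$$
a finite sum. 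Since path differentiability is preserved under finite nonnegative linear combinations (the Minkowski sum of finitely many conservative fields is conservative for the sum; see \citep{bolte2021conservative} and \ref{sec:conservative_fields}), it will suffice to show that for each fixed $(X, Y) \in \Xn \times \Yn$ the map $\Phi_{X,Y} : \p \mapsto \int_{\SS^{\dimy-1}} f(\p, X, Y, \theta)\,\dd\bbsigma(\theta)$ is path differentiable.

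\textbf{Step 1: path differentiability of the integrand at fixed $\theta$.} For fixed $(X, Y, \theta)$ I would write $f(\p, X, Y, \theta) = w_\theta(T(\p, X), Y)$ and argue as a composition. On the one hand, $w_\theta(\cdot, Y)$ is path differentiable on $\R^{\npoints \times \dimy}$: one-dimensional optimal transport between uniform discrete measures with $\npoints$ atoms gives $w_\theta(Z, Y) = \min_{\pi \in \Perm_\npoints} \tfrac{1}{\npoints} \sum_{k=1}^{\npoints}(\theta^\top z_k - \theta^\top y_{\pi(k)})^2$, a pointwise minimum of finitely many quadratic (hence $\mathcal{C}^1$) functions of $Z$, which is path differentiable — being the composition of a $\mathcal{C}^1$ map into $\R^{\npoints!}$ with the path-differentiable locally Lipschitz function $\min$ (this regularity of $w_\theta$ is moreover already available in \citep{discrete_sliced_loss}). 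On the other hand, $\p \mapsto T(\p, X) = (T(\p, x_1), \ldots, T(\p, x_\npoints))$ is a path-differentiable (vector-valued) map, being the concatenation of the path-differentiable maps $T(\cdot, x_k)$ from \ref{ass:T_path_diff}, and concatenation preserves this property (see \ref{sec:conservative_mappings}). By the chain rule for composition with a conservative mapping (\citep{bolte2021conservative}), $f(\cdot, X, Y, \theta) = w_\theta(\cdot, Y) \circ T(\cdot, X)$ is then path differentiable; write $D_{X,Y,\theta}$ for an associated conservative field.

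\textbf{Step 2: averaging over $\theta$.} To conclude I would show that $\Phi_{X,Y} = \int_{\SS^{\dimy-1}} f(\cdot, X, Y, \theta)\,\dd\bbsigma(\theta)$ is path differentiable, with candidate conservative field the Aumann integral $\p \mapsto \int_{\SS^{\dimy-1}} D_{X,Y,\theta}(\p)\,\dd\bbsigma(\theta)$, by invoking the stability of path differentiability under integration against a probability measure (see \ref{sec:conservative_fields}). Two hypotheses need checking. The domination bound comes from \ref{ass:T_loclip} via \ref{prop:f_loclip}: on any ball $B(\p_0, \varepsilon)$ the map $f(\cdot, X, Y, \theta)$ is $K_f(\varepsilon, \p_0, X, Y)$-Lipschitz with a constant \emph{independent of} $\theta$, so all elements of $D_{X,Y,\theta}(\p)$ are bounded in norm by $K_f(\varepsilon, \p_0, X, Y)$, uniformly in $\p \in B(\p_0, \varepsilon)$ and in $\theta$. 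Measurability of $\theta \mapsto D_{X,Y,\theta}$ can be obtained by choosing $D_{X,Y,\theta} = \partial_C f(\cdot, X, Y, \theta)$ — which is a conservative field once $f(\cdot, X, Y, \theta)$ is path differentiable, by the equivalences recalled after \ref{cond:A5} — together with the joint continuity of $(\p, \theta) \mapsto f(\p, X, Y, \theta)$ and a standard measurable-graph argument. The integration lemma then yields path differentiability of $\Phi_{X,Y}$, and the reduction of the first paragraph finishes the proof.

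\textbf{Anticipated main obstacle.} The hard part is Step 2: correctly stating, and verifying the hypotheses of, the principle that an expectation of path-differentiable functions with a common local Lipschitz bound is path differentiable with conservative field the Aumann integral. This is exactly where \ref{ass:T_loclip} is used — it supplies, through \ref{prop:f_loclip}, the single $\theta$-uniform Lipschitz constant controlling all the fields $D_{X,Y,\theta}$ at once — and it is also, I believe, the real reason \ref{ass:mx_my_discrete} is assumed: collapsing the $X$- and $Y$-averages to finite sums removes the need for a \emph{measurable selection} of conservative fields $x \mapsto D_{T(\cdot, x)}$, which \ref{ass:T_path_diff} alone does not provide, whereas the $\theta$-dependence is tame because $(\theta, Z) \mapsto w_\theta(Z, Y)$ is jointly regular.
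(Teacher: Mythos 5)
There is a genuine gap in Step~2, and it is precisely the step the paper is structured to avoid. Your argument needs a lemma of the form ``if each $f(\cdot,X,Y,\theta)$ is path differentiable and locally Lipschitz uniformly in $\theta$, then $\p\mapsto\int_{\SS^{\dimy-1}}f(\p,X,Y,\theta)\,\dd\bbsigma(\theta)$ is path differentiable with conservative field the Aumann integral of the $D_{X,Y,\theta}$.'' No such theorem is available: the paper's own conclusion states explicitly that ``if each $g(\cdot,x)$ is path differentiable, it is not always the case that $\int g(\cdot,x)\,\dd x$ is path differentiable (at the very least, there is no theorem stating this \ldots see \citep{bianchi2022convergence}, Section 6.1).'' A uniform Lipschitz bound and measurability of the selection do not suffice; the obstruction is that the chain-rule identity $(f(\cdot,\theta)\circ\p)'(s)=\langle v,\dot\p(s)\rangle$ holds only for $s$ outside a $\theta$-dependent null set, and the union of these null sets over the uncountable index $\theta\in\SS^{\dimy-1}$ need not be null, so one cannot pass the chain rule through the integral. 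This is exactly why \ref{ass:mx_my_discrete} is imposed for the $X,Y$ variables — your diagnosis of that point is right — but the same difficulty afflicts the $\theta$-average, which is a continuous expectation that cannot be collapsed to a finite sum.

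The paper sidesteps this by integrating over $\theta$ \emph{before} invoking any stability property: it works directly with $\SWY(\cdot,Y):=\SW_2^2(\bbgamma_{\cdot},\bbgamma_Y)$, which is known to be semi-concave (\citep{discrete_sliced_loss}, Proposition 2.4), and semi-concavity implies path differentiability. Then $F(\p)=\sum_{k,l}a_kb_l\,\SWY(T(\p,X_k),Y_l)$ is a finite sum of compositions of path differentiable maps, and the finite-sum and composition stability results of \citep{bolte2021conservative} finish the proof. Note that this route uses a property specific to the order $p=2$ (the appendix on other orders $p$ points out that the generalisation is open for exactly this reason), so your per-$\theta$ decomposition, even if it looks more modular, cannot be repaired without supplying the missing integration lemma. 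Your outer reduction to a finite sum over $(X,Y)$ and your Step~1 composition argument are correct and consistent with the paper; the proof should be reorganised so that the object being composed with $T(\cdot,X)$ is the already-$\theta$-averaged, semi-concave function $\SWY(\cdot,Y)$ rather than the individual $w_\theta(\cdot,Y)$.
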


\begin{proof}
	We shall use repeatedly the property that the composition of path differentiable functions remains path differentiable, which is proved in \citep{bolte2021conservative}, Lemma 6. 
	
	Let $\SWY: \app{\R^{\npoints \times \dimy} \times \R^{\npoints \times \dimy}}{\R_+}{Y, Y'}{\SW_2^2(\bbgamma_Y, \bbgamma_{Y'})}$. By \citep{discrete_sliced_loss}, Proposition 2.4.3, each $\SWY(\cdot, Y)$ is semi-concave and thus is path differentiable (by \citep{discrete_sliced_loss}, Proposition 4.3.3).
	
	Thanks to \ref{ass:mx_my_discrete}, $\mxn$ and $\myn$ are discrete measures on $\R^{\npoints \times \dimx}$ and $\R^{\npoints \times \dimy}$ respectively, allowing one to write $\mxn = \sum_k a_k\bbdelta_{X_k}$ and $\myn = \sum_l b_l\bbdelta_{Y_l}$. Then $F = \p \longmapsto \sum_{k,l}a_kb_l\SWY(T(\p, X_k), Y_l)$ is path differentiable as a sum (\citep{bolte2021conservative}, Corollary 4) of compositions (\citep{bolte2021conservative}, Lemma 6) of path differentiable functions.
\end{proof}

We have now satisfied all the assumptions to apply \citep{bianchi2022convergence}, Theorem 6, showing that trajectories of \ref{eqn:SW_SGD_projected_noised} converge towards to a set of generalised critical points\footnote{\red{Typically referred to as the set of \textit{Karush-Kahn-Tucker} points of the differential inclusion $\dot u(s) \in -\partial_C F(u(s)) - \mathcal{N}_r(u(s))$.}} $\mathcal{C}_r$ defined as
\begin{equation}\label{eqn:KKT_points}
	\mathcal{C}_r := \left\lbrace \p \in \R^\dimp\ |\ 0 \in -\partial_CF(\p) - \mathcal{N}_r(\p) \right\rbrace, \quad \mathcal{N}_r(\p) = \left\lbrace\begin{array}{c}
		\lbrace 0 \rbrace\ \mathrm{if}\ \|\p\|_2 < r \\
		\lbrace s \p\ |\ s \geq 0 \rbrace\ \mathrm{if}\ \|\p\|_2 = r \\
		\varnothing\ \mathrm{if}\ \|\p\|_2 > r
	\end{array} \right. ,
\end{equation}
where $\mathcal{N}_r(\p)$ refers to the \textit{normal cone} of the ball $\oll{B}(0, r)$ at $x$. The term $\mathcal{N}_r(\p)$ in \ref{eqn:KKT_points} only makes a difference in the pathological case $\|\p\|_2 = r$, which never happens in practice since the idea behind projecting is to do so on a very large ball, in order to avoid gradient explosion, to limit the Lipschitz constant and to satisfy theoretical assumptions. Omitting the $\mathcal{N}_r(\p)$ term, and denoting $\mathcal{D}$ the points where $F$ is differentiable, \ref{eqn:KKT_points} simplifies to $\mathcal{C}_r \cap \mathcal{D} = \lbrace\p \in \mathcal{D}\ |\ \nabla F(\p) = 0\rbrace$, i.e. the critical points of $F$ for the usual differential. Like in \ref{thm:SGD_interpolated_cv}, we let $\lr_1 < \lr_0$, where $\lr_0$ is defined in \ref{prop:SW_Gamma}. We have met the conditions to apply \cite{bianchi2022convergence}, Theorem 6, showing a long-run convergence results on the SGD trajectories \ref{eqn:SW_SGD_projected_noised}.

\begin{theorem}\label{thm:SGD_projected_noised}
	
	Consider a neural network $T$ and measures $\mx$, $\my$ satisfying \ref{ass:C2_ae}, \ref{ass:T_loclip}, \ref{ass:T_slow_increase}, \ref{ass:d2T2_and_dT_bounded}, \ref{ass:mx_my_discrete} and \ref{ass:T_path_diff}. Let $(\p_{\lr}^{(t)})_{t \in \N}$ be SGD trajectories defined by \ref{eqn:SW_SGD_projected_noised} for $r > 0$ and $\lr \in ]0, \lr_1]$. One has	
	$$\forall \eta > 0,\; \underset{t \longrightarrow +\infty}{\oll{\lim}}\ \mpzero \otimes \mxN\otimes \myN \otimes \bbsigma^{\otimes \N} \otimes \mnoise^{\otimes \N}\left(d(\p_{\lr}^{(t)}, \mathcal{C}_r) > \eta\right) \xrightarrow[\substack{\lr \longrightarrow 0\\ \lr \in \left] 0, \lr_1\right]}]{} 0.$$
\end{theorem}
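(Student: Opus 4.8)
The plan is to obtain the statement as a direct application of \citet{bianchi2022convergence}, Theorem 6 --- the long-run, constrained counterpart of their Theorem 2 --- to the population loss $F$ of \ref{eqn:F}. Most of the required verifications are already in place: \ref{cond:A1} and \ref{cond:A2} hold for the sample loss $f$ of \ref{eqn:f} (using \ref{prop:f_loclip}, together with \ref{ass:T_slow_increase} and \ref{ass:mx_my}), \ref{cond:A3} holds for $\lr \in\ ]0, \lr_0[$ by \ref{prop:SW_Gamma} --- which is exactly why we restrict to $\lr \in\ ]0, \lr_1]$ with $\lr_1 < \lr_0$ --- and \ref{cond:A5}, path differentiability of $F$, is \ref{prop:F_path_diff}. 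So the only thing left is to check that the noised, projected recursion \ref{eqn:SW_SGD_projected_noised} fits the hypotheses that are specific to the constrained long-run theorem, and to identify its limit set.

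First I would note that $B_r = \oll{B}(0, r)$ is a convex compact subset of $\R^\dimp$ with nonempty interior and that $\pi_r$ is the associated single-valued, $1$-Lipschitz Euclidean projection; hence \ref{eqn:SW_SGD_projected_noised} is a bona fide projected stochastic approximation scheme with state space $B_r$, and in particular the iterates $(\p_\lr^{(t)})_t$ are automatically bounded, so the compact-containment and tightness requirements are met for free. Second, the additive perturbation $\lr \noise \varepsilon^{(t+1)}$ with $\varepsilon^{(t+1)} \sim \mnoise \ll \bblambda_{\R^\dimp}$ makes the one-step kernel of the pre-projection update absolutely continuous with respect to $\bblambda_{\R^\dimp}$ for every $\lr > 0$, which is the ingredient that lets one rule out spurious equilibria; combined with \ref{cond:A1} and \ref{cond:A2} this already yields the analogue of \ref{thm:SGD_interpolated_cv} for \ref{eqn:SW_SGD_projected_noised} via \citep{bianchi2022convergence}, Theorem 4. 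Third, I would identify the continuous-time object attached to the projected dynamics: it is the differential inclusion $\dot \p(s) \in -\partial_C F(\p(s)) - \mathcal{N}_r(\p(s))$, whose equilibria are exactly the set $\mathcal{C}_r$ of \ref{eqn:KKT_points}; here path differentiability of $F$ supplies the chain rule (equivalently, that $\partial_C F$ is a conservative field for $F$) that \citep{bianchi2022convergence}, Theorem 6 needs in order to upgrade convergence of interpolations to long-run convergence of the iterates towards $\mathcal{C}_r$.

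Collecting these facts, \citep{bianchi2022convergence}, Theorem 6 applies to $F$ with constraint set $B_r$ and noise law $\mnoise$, and gives precisely the asserted double limit: for every $\eta > 0$, $\underset{t \to +\infty}{\oll{\lim}}\, \mpzero \otimes \mxN \otimes \myN \otimes \bbsigma^{\otimes \N} \otimes \mnoise^{\otimes \N}\big(d(\p_\lr^{(t)}, \mathcal{C}_r) > \eta\big) \to 0$ as $\lr \to 0$ in $]0, \lr_1]$. I expect the main obstacle not to be any single estimate but to be ensuring that path differentiability survives the minibatch and composition structure of $F$ --- this is the one step where a genuine argument is required, and it is \ref{prop:F_path_diff}, which reduces it, via stability of path differentiability under finite sums and compositions together with semiconcavity of the one-dimensional squared Wasserstein energy, to the assumed path differentiability of each $T(\cdot, x)$ (\ref{ass:T_path_diff}). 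A secondary point to handle with care is the boundary case $\|\p\|_2 = r$, where $\mathcal{N}_r(\p)$ is nontrivial; as noted after \ref{eqn:KKT_points} this set is irrelevant in practice (one projects onto a very large ball), and $\mathcal{C}_r \cap \mathcal{D}$ is then just the usual critical set $\{\nabla F = 0\}$.
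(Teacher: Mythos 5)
Your proposal is correct and follows essentially the same route as the paper: verify \ref{cond:A1} and \ref{cond:A2} via \ref{prop:f_loclip} and \ref{ass:T_slow_increase}, verify \ref{cond:A3} via \ref{prop:SW_Gamma} (whence the restriction $\lr \le \lr_1 < \lr_0$), establish path differentiability of $F$ (\ref{cond:A5}) via \ref{prop:F_path_diff}, and then invoke \citet{bianchi2022convergence}, Theorem~6 for the projected, noised scheme with constraint set $B_r$ and limit set $\mathcal{C}_r$. The additional observations you make (convexity and compactness of $B_r$, absolute continuity of the pre-projection kernel supplied by the additive noise, the intermediate Theorem~4 analogue of \ref{thm:SGD_interpolated_cv}) match the paper's discussion preceding the theorem.
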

\vspace{-10pt}
The distance $d$ above is the usual euclidean distance. \ref{thm:SGD_projected_noised} shows essentially that as the learning rate approaches 0, the long-run limits of the SGD trajectories approach the set of $\mathcal{C}_r$ in probability. Omitting the points of non-differentiability and the pathological case $\|u\|_2=r$, the general idea is that $u_\alpha^{(\infty)} \xrightarrow[\lr \longrightarrow 0]{} \lbrace u\ :\ \nabla F(u)=0 \rbrace$, which is the convergence that would be achieved by the gradient flow of $F$, in the simpler case of $\mathcal{C}^2$ smoothness.

\section{Conclusion and Outlook}

Under reasonable assumptions, we have shown that SGD trajectories of parameters of generative NNs with a minibatch SW loss converge towards the desired sub-gradient flow solutions, implying in a weak sense the convergence of said trajectories. Under stronger assumptions, we have shown that trajectories of a mildly modified SGD scheme converge towards a set of generalised critical points of the loss, which provides a missing convergence result for such optimisation problems.

The core limitation of this theoretical work is the assumption that the input data measure $\mx$ is discrete (\ref{ass:mx_my_discrete}), which we required in order to prove that the loss $F$ is path differentiable. In order to generalise to a non-discrete measure, one would need to apply or show a result on the stability of path differentiability through integration: in our case, we want to show that $\int_{\Xn} \SWY(T(\p, X), Y)\dd\mxn(X)$ is path differentiable, knowing that $\p \longmapsto \SWY(T(\p, X), Y)$ is path differentiable by composition (see the proof of \ref{prop:F_path_diff} for the justification). Unfortunately, in general if each $g(\cdot, x)$ is path differentiable, it is not always the case that $\int g(\cdot, x)\dd x$ is path differentiable (at the very least, there is no theorem stating this, even in the simpler case of another sub-class of path differentiable functions, see \citep{bianchi2022convergence}, Section 6.1). However, there is such a theorem (specifically \citep{clarke1990optimization}, Theorem 2.7.2 with Remark 2.3.5) for \textit{Clarke regular} functions \red{(see \ref{sec:Clarke_regular} for a presentation of this regularity class)}, sadly the composition of Clarke regular functions is not always Clarke regular, it is only known to be the case in excessively restrictive cases (see \citep{clarke1990optimization}, Theorems 2.3.9 and 2.3.10). \redtwo{Similarly to the continuous case, the simpler generalisation in which $\mx$ has a countable support adds substantial difficulty, since all of the typical tools (path differentiability itself, Clarke regularity or even definability (see \citep{bolte2021conservative} Section 4.1 for a first introduction) do not have readily applicable results for infinite operations, to our knowledge. As a result, we leave the generalisation to a non-discrete input measure $\mx$ for future work.}

\redtwo{Our studies focus on the 2-SW distance, but our results from \ref{sec:interpolated_SGD} can be extended to $p\in [1, +\infty[$, as presented in the appendix (\ref{sec:other_p}). However, as also discussed in the Appendix, the generalisation of \ref{sec:noised_proj_sgd} is still an open problem, since it has not yet be proven that $X \longmapsto \SW_p^p(\bbgamma_X, \bbgamma_Y)$ is path differentiable for $p\neq 2$.}

\redtwo{This paper studies the use of the \textit{average} SW distance as a loss, and an extension to related distances would be worth considering. The average SW distance aggregates the projected distances through an expectation, while the closely-related \textit{max}-Sliced Wasserstein distance introduced by \citet{deshpande2019max} aggregates the projections via a maximisation on the axis $\theta \in \SS^{d-1}$. The training paradigm presented in \citep{deshpande2019max} differs strongly from our formalism since it applies to GANs, however one could consider an extension of our formalism in which the optimal projection $\theta$ becomes a learned parameter of the neural network. A related extension is the Subspace-Robust Wasserstein distance \citep{paty2019subspace}, which can take the following formulation
$$\mathcal{S}^2_k(\mx, \my) =\underset{\substack{0 \preceq \Omega \preceq I_d \\ \mathrm{trace}(\Omega) = k}}{\max}\ \W_2^2(\Omega^{1/2}\#\mx, \Omega^{1/2}\#\my),$$
for which one could consider a similar extension where the positive semi-definite $\Omega$ becomes a learned parameter of $T$.} 

Another avenue for future study would be to tie the flow approximation result from \ref{thm:SGD_interpolated_cv} to Sliced Wasserstein Flows \citep{liutkus19a_SWflow_generation, bonet2022efficient}. The difficulty in seeing the differential inclusion \ref{eqn:S} as a flow of $F$ lies in the non-differentiable nature of the functions at play, as well as the presence of the composition between SW and the neural network $T$, which bodes poorly with Clarke sub-differentials. 
\subsection*{Acknowledgements}

We thank Julie Delon for proof-reading and general feedback, as well as Rémi Flamary and Alain Durmus for fruitful discussions.

\bibliography{ecl}
\bibliographystyle{tmlr}

\appendix
\redtwo{\section{Table of Notations}\label{sec:notations}}

\begin{table}[H]
	\centering
	\caption{List of Notations}
	\begin{tabular}{|c|c|}
		\hline
		\textbf{Symbol} & \textbf{Explanation} \\
		\hline
		$\bbgamma_X$ & Given $X = (x_1, \cdots, x_n) \in \R^{n \times d},\; \bbgamma_X = \frac{1}{n}\sum_i\bbdelta_{x_i}$\\
		\hline
		$X$ & $(x_1, \cdots, x_n) \in \R^{\npoints \times \dimx}$ an input data sample of law $\mxn$ \\
		\hline
		$\mx$ & input data probability measure on $\R^\dimx$, supported on $\X$ \\
		\hline
		$Y$ & $(y_1, \cdots, y_n) \in \R^{\npoints \times \dimy}$ a target data sample of law $\myn$ \\
		\hline
		$\my$ & target data probability measure on $\R^\dimy$, supported on $\Y$  \\
		\hline
		$\theta$ & direction in $\SS^{\dimy -1}$\\
		\hline
		$\bbsigma$ & uniform measure on $\SS^{\dimy-1}$\\
		\hline
		$\data := (X, Y, \theta)$ & sample in $X, Y$ and $\theta$\\
		\hline
		$\mdata := \mxn \otimes \myn \otimes \bbsigma$ & probability measure for the samples $\data$, supported on $\Data := \Xn \times \Yn \times \SS^{\dimy - 1}$ \\
		\hline 
		$\p$ & neural network parameters in $\R^\dimp$ \\
		\hline
		$T(\p, X)$ & neural network function defined in \ref{eqn:T} \\
		\hline 
		$f(\p, X, Y, \theta)$ & sample loss function defined in \ref{eqn:f} \\
		\hline
		$F(\p)$ & population loss function defined in \ref{eqn:F} \\
		\hline
		$w_\theta(Y, Y')$ & discrete and projected 2-Wasserstein distance $\W_2^2(P_\theta\#\bbgamma_Y, P_\theta\#\bbgamma_{Y'})$ \\
		\hline
		$\varphi(\p, X, Y, \theta)$ & almost-everywhere gradient of $f(\cdot, X, Y, \theta)$ defined in \ref{eqn:ae_grad}\\
		\hline
		$K_w, K_f, K_F$ & local Lipschitz constants of $w, f, F$ respectively (see Propositions 1, 2, 3)\\
		\hline
		$\lr; \noise$ & SGD learning rate; noise level \\
		\hline
		$\bblambda_{\R^d}; \rho \ll \bblambda_{\R^d}$ & Lebesgue measure on $\R^d$; a measure $\rho$ absolutely continuous w.r.t. $\bblambda_{\R^d}$\\
		\hline
		$\partial_C$ & Clarke differential, defined in \ref{eqn:clarke}\\
		\hline
		$\mpzero$ & probability measure of SGD initialisation $\p^{(0)}$\\
		\hline
		$\varepsilon^{(t)}$ & additive noise in $\R^{\dimp}$ at SGD step $t$\\
		\hline
		$\mnoise$ & additive noise probability measure on $\R^{\dimp}$\\
		\hline
		$B_{\|\cdot\|}(x, R), \oll{B}_{\|\cdot\|}(x, R)$ & open (resp. closed) ball of centre $x$ and radius $R$ for the norm $\|\cdot\|$ \\
		\hline
	\end{tabular}
	\label{tab:notations}
\end{table}

\redtwo{\section{Postponed Proofs}\label{sec:proof_SW_gamma}}

\paragraph{Proof of \ref{prop:SW_Gamma}} 

\begin{proof}
	Let $\mp \ll \bblambda$ and $B \in \mathcal{B}(\R^\dimp)$ such that $\bblambda(B) = 0$. We have, with $\lr' := 2\lr/\npoints,\;\data := (X, Y, \theta),\; \mdata := \mxn \otimes \myn \otimes \bbsigma$ and $\Data := \Xn \times \Yn \times \SS^{\dimy-1},$	
	\begin{align*}\mp P_\lr(B) &= \Int{\R^\dimp\times \Data}{}\mathbbold{1}_B\left[\p -\lr'\Sum{k=1}{\npoints}\left(\dr{\p}{}{T}(\p, x_k)\right)^\top  \theta \theta^\top (T(\p, x_k) - y_{\sigma_\theta^{T(\p, X), Y}(k)})\right]\dd\mp(\p)\dd\mdata(\data) \\
		&\leq \Sum{\tau \in \mathfrak{S}_\npoints}{}\Int{\Data}{}I_\tau(\data)\dd\mdata(\data),
	\end{align*}	
	where $I_\tau(\data) := \Int{\R^\dimp}{}\mathbbold{1}_B\left(\phi_{\tau,\data}(\p)\right)\dd\mp(\p)$, with $\phi_{\tau,\data} := \p - \lr'\underbrace{\Sum{k=1}{\npoints}\left(\dr{\p}{}{T}(\p, x_k)\right)^\top  \theta \theta^\top (T(\p, x_k) - y_{\tau(k)})}_{\psi_{\tau,\data} := }$.
	
	Let $\tau \in \mathfrak{S}_\npoints$ and $(X, Y, \theta) \in \Data$. Using \ref{ass:C2_ae}, separate $I_\tau(\data) = \Sum{j\in J}{}\Int{\U_j(X)}{}\mathbbold{1}_B\left(\p - \psi_{\tau,\data}(\p)\right)\dd\mp(\p),$ where the differentiability structure $(\U_j(X))_{j \in J(X)}$ is obtained using the respective differentiability structures: for each $k \in \llbracket 1, \npoints \rrbracket$, \ref{ass:C2_ae} yields a structure $(\U_{j_k}(x_k))_{j_k \in J_k(x_k)}$ of $\p \longmapsto T(\p, x_k)$, which depends on $x_k$, hence the $k$ indices.
	
	To be precise, define for $j = (j_1, \cdots, j_\npoints) \in J_1(x_1) \times \cdots \times J_\npoints(x_\npoints),\; \U_j(X) := \Inter{k=1}{\npoints}\U_{j_k}(x_k)$, and $J(X) := \left\lbrace (j_1, \cdots, j_\npoints) \in J_1(x_1) \times \cdots \times J_\npoints(x_\npoints)\ |\ \U_j(X) \neq \varnothing \right\rbrace$. In particular, for any $k \in \llbracket 1, \npoints \rrbracket,\; T(\cdot, x_k)$ is $\mathcal{C}^2$ on $\U_j(X)$. Notice that the derivatives are not necessarily defined on the border $\partial \U_j(X)$, which is of Lebesgue measure 0 by \ref{ass:C2_ae}, thus the values of the derivatives on the border do not change the value of the integrals (the integrals may have the value $+\infty$, depending on the behaviour of $\phi_{\tau, s}$, but we shall see that they are all finite when $\lr$ is small enough).
	
	We drop the $\data, \tau$ index in the notation, and focus on the properties of $\phi$ and $\psi$ as functions of $\p$. Our first objective is to determine a constant $K>0$, independent of $\p, \data, \tau$, such that $\psi$ is $K$-Lipschitz on $\U_j(X)$. 
	
	First, let $\chi := \app{\U_j(X)}{\R^\dimp}{\p}{\left(\dr{\p}{}{T}(\p, x_k)\right)^\top  \theta \theta^\top  T(\p, x_k)}$. The function $\chi$ is of class $\mathcal{C}^1$, therefore we determine its Lipschitz constant by upper-bounding the $\|\cdot\|_2$-induced operator norm of its differential, denoted by $\nt{\cfrac{\partial\chi}{\partial\p}(\p)}_2$. Notice that $\chi(\p) = \cfrac{1}{2}\cfrac{\partial}{\partial \p}\left(\theta^\top  T(\p, x_k)\right)^2$.
	
	Now $\nt{\cfrac{\partial^2}{\partial\p^2}\left(\theta^\top  T(\p, x_k)\right)^2}_2 \leq \dimp\underset{(i_1, i_2) \in \llbracket 1, \dimp \rrbracket^2}{\max}\ \left|\cfrac{\partial^2}{\partial\p_{i_1}\partial\p_{i_2}}\left(\theta^\top  T(\p, x_k)\right)^2\right|$, using \ref{ass:d2T2_and_dT_bounded} and $|\theta_i| \leq 1$,
	$$\left|\cfrac{\partial^2}{\partial\p_{i_1}\partial\p_{i_2}}\left(\theta^\top  T(\p, x_k)\right)^2\right| \leq \Sum{(i_3, i_4) \in \llbracket 1, \dimy \rrbracket^2}{}\left|\theta_{i_3}\theta_{i_4}\cfrac{\partial^2 }{\partial\p_{i_1}\partial\p_{i_2}}\Big([T(\p, x_k)]_{i_3}[T(\p, x_k)]_{i_4}\Big)\right| \leq \dimy^2 \MddT.$$	
	We obtain that $\chi$ is $\frac{1}{2}\dimp\dimy^2\MddT$-Lipschitz. 
	
	Second, let $\omega:  \p \in \U_j(X) \longmapsto \left(\dr{\p}{}{T}(\p, x_k)\right)^\top  \theta \theta^\top y_{\tau(k)}$, also of class $\mathcal{C}^1$. We re-write $\left[\cfrac{\partial \omega}{\partial\p}(\p)\right]_{i_1, i_2} = y_{\tau(k)}^\top \theta\theta^\top \cfrac{\partial^2 T}{\partial \p_{i_1}\partial\p_{i_2}}(\p, x_k)$, and conclude similarly by \ref{ass:d2T2_and_dT_bounded} that $\omega$ is $\|y_{\tau(k)}\|_2\dimp \MddT$-Lipschitz.
	
	Finally, $\psi = \Sum{k=1}{\npoints}(\chi_k - \omega_k)$, and is therefore $K := (\frac{1}{2}\dimy^2 + R_y )\dimp\npoints\MddT$-Lipschitz, with $R_y$ from \ref{ass:mx_my}. We have proven that $\nt{\cfrac{\partial \psi}{\partial \p}(\p)}_2 \leq K$ for any $\p \in \U_j(X)$, and that $K$ does not depend on $X, Y, \theta, j$ or $\p$.
	
	We now suppose that $\lr' < \frac{1}{K}$, which is to say $\lr < \frac{\npoints}{2K}$. Under this condition, $\phi: \U_j(X) \longrightarrow \R^\dimp$ is injective. Indeed, if $\phi(\p_1) = \phi(\p_2)$, then $\|\p_1-\p_2\|_2 = \lr'\|\psi(\p_1)-\psi(\p_2)\|_2 \leq \lr'K\|\p_1-\p_2\|_2$, thus $\p_1 = \p_2$. Furthermore, for any $\p \in \U_j(X),\; \cfrac{\partial \phi}{\partial \p}(\p) = \mathrm{Id}_{\R^\dimp} - \lr'\cfrac{\partial \psi}{\partial \p}(\p)$, with $\nt{\lr'\cfrac{\partial \psi}{\partial \p}(\p)}_2 < 1$, thus the matrix $\cfrac{\partial \phi}{\partial \p}(\p)$ is invertible (using the Neumann series method). By the global inverse function theorem, $\phi: \U_j(X) \longrightarrow \phi(\U_j(X))$ is a $\mathcal{C}^1$-diffeomorphism.
	
	\red{Using the change-of-variables formula, we have $\Int{\U_j(X)}{}\mathbbold{1}_B(\phi(\p))\dd\mp(\p) = \Int{\U_j(X)}{}\mathbbold{1}_B(\p')\dd\phi\#\mp(\p') = \phi\#\mp(B)$, we have now shown that $\phi$ is a $\mathcal{C}^1$-diffeomorphism, thus since $\mp \ll \bblambda$, $\phi\#\mp \ll \bblambda$. ($\alpha \ll\beta$ denoting that $\alpha$ is absolutely continuous with respect to $\beta$). Since $\bblambda(B) = 0$, it follows that the integral is 0, then by sum over $j$, $I_\tau(\data) = 0$ and finally $\mp P_\lr(B) = 0$ by integration over $\data$ and sum over $\tau$.}
\end{proof}

\section{Background on Non-Smooth and Non-Convex Analysis}\label{sec:nonsmooth}

This work is placed within the context of non-smooth optimisation, a field of study in part introduced by Clarke with the so-called Clarke differential, which we introduced in Equation \ref{eqn:clarke} (see \citep{clarke1990optimization} for a general reference on this object). The purpose of this appendix is to present several adjacent objects that can be useful to the application of our results, even though we do not need them in order to prove our theorems. 

\subsection{Conservative Fields}\label{sec:conservative_fields}

The Clarke differential $\partial_C$ of a locally Lipschitz function $g: \R^d \longrightarrow \R$ (defined in Equation \ref{eqn:clarke}) is an example of a \textit{set-valued map}. Such a map is a function $D: \R^p \rightrightarrows \R^q$ from the subsets of $\R^p$ to the subsets of $\R^q$, for instance in the case of the Clarke differential, we have the signature $\partial_C g: \R^d \rightrightarrows \R^d$. A set-valued map $D$ is \textit{graph closed} if its graph $\lbrace (\p, v)\ |\ \p \in \R^p,\; v\in D(\p) \rbrace$ is a closed set of $\R^{p+q}$. A set-valued map $D$ is said to be a \textit{conservative field}, when it is graph closed, has non-empty compact values and for any absolutely continuous loop $\gamma \in \mathcal{C}_{\mathrm{abs}}([0, 1], \R^p)$ with $\gamma(0) = \gamma(1)$, we have 
$$\Int{0}{1}\underset{v \in D(\gamma(s))}{\max}\ \langle \dot \gamma(s), v \rangle \dd s = 0. $$
Similarly to primitive functions in calculus, one may define a function $g: \R^d \longrightarrow \R$ using a conservative field $D: \R^d\rightrightarrows \R^d$ up to an additive constant through following expression:
\begin{equation}\label{eqn:potential}
	g(\p) = g(0) + \Int{0}{1}\underset{v \in D(\gamma(s))}{\max}\ \langle \dot \gamma(s), v \rangle \dd s, \quad \forall \gamma \in \mathcal{C}_{\mathrm{abs}}([0, 1], \R^p)\ \text{such\ that}\ \gamma(0) = 1\ \text{and}\ \gamma(1)=\p.
\end{equation}
In this case, we say that $g$ is a \textit{potential function} for the field $D$. This notion allows us to define a new regularity class: a function $g: \R^d \longrightarrow \R$ is called \textit{path differentiable} when there exists a conservative field of which it is a potential. A standard result in non-smooth optimisation is the following equivalence between different notions of regularity:

\begin{prop}\label{prop:equiv_path_regular}\citet{bolte2021conservative}, Corollary 2. Let $g: \R^d \longrightarrow \R$ locally Lipschitz. We have the equivalence between the following statements:
	\begin{itemize}
		\item $g$ is path differentiable
		\item $\partial_C g$ is a conservative field
		\item $g$ has a \textit{chain rule} for the Clarke differential $\partial_C$:
		\begin{equation}\label{eqn:chaine_rule}
			\forall \p \in \mathcal{C}_{\mathrm{abs}}(\R_+, \R^d), \; \ull{\forall} s > 0,\; \forall v \in \partial_Cg(\p(s)),\; v^\top \dot \p(s) = (g \circ \p)'(s).
		\end{equation}
	\end{itemize}
\end{prop}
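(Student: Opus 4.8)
The plan is to prove the three statements equivalent by establishing the cycle (ii)$\Rightarrow$(iii)$\Rightarrow$(i)$\Rightarrow$(ii), where (i), (ii), (iii) name the three bulleted conditions. The one tool that recurs throughout is the following \emph{loop-reversal observation}: if $D$ is any conservative field and $\gamma\in\mathcal{C}_{\mathrm{abs}}([0,1],\R^d)$ is an \emph{arbitrary} absolutely continuous path, then concatenating $\gamma$ with its time-reversal $t\mapsto\gamma(1-t)$ produces a loop, and applying the conservativity axiom to it gives $\int_0^1\big(\max_{v\in D(\gamma(s))}\langle\dot\gamma(s),v\rangle-\min_{v\in D(\gamma(s))}\langle\dot\gamma(s),v\rangle\big)\dd s=0$; since the integrand is nonnegative it vanishes for a.e.\ $s$, so $s\mapsto\langle\dot\gamma(s),v\rangle$ is constant over $v\in D(\gamma(s))$ for a.e.\ $s$. (Here $D$ being compact-valued and graph closed makes these $\max$/$\min$ finite and measurable along $\gamma$.)

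For (ii)$\Rightarrow$(iii), assume $\partial_C g$ is conservative and fix $\p\in\mathcal{C}_{\mathrm{abs}}(\R_+,\R^d)$. Since $g$ is locally Lipschitz, $g\circ\p$ is locally Lipschitz hence differentiable a.e., and at a point $s$ where $\p$ and $g\circ\p$ are both differentiable the classical Clarke chain-rule inclusion gives $(g\circ\p)'(s)\in\{\langle\dot\p(s),v\rangle:v\in\partial_C g(\p(s))\}$. Applying the loop-reversal observation with $D=\partial_C g$ to $\p$ restricted to $[0,T]$, and letting $T$ range over $\N$, the set on the right is a singleton for a.e.\ $s$; hence $(g\circ\p)'(s)=v^\top\dot\p(s)$ for every $v\in\partial_C g(\p(s))$, which is exactly the chain rule. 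For (iii)$\Rightarrow$(i): $\partial_C g$ is always graph closed with nonempty compact values when $g$ is locally Lipschitz, and for any path $\gamma$ from $0$ to $\p$ the chain rule makes $\max_{v\in\partial_C g(\gamma(s))}\langle\dot\gamma(s),v\rangle=(g\circ\gamma)'(s)$ a.e., so $g(0)+\int_0^1\max_{v\in\partial_C g(\gamma(s))}\langle\dot\gamma(s),v\rangle\dd s=g(0)+\int_0^1(g\circ\gamma)'(s)\dd s=g(\p)$; taking $\gamma$ a loop also shows the loop integral vanishes. Thus $\partial_C g$ is conservative and admits $g$ as a potential, i.e.\ $g$ is path differentiable.

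It remains to prove (i)$\Rightarrow$(ii). Let $D$ be a conservative field with $g$ as potential. The essential point is that $D(x)=\{\nabla g(x)\}$ for a.e.\ $x$: applying the loop-reversal observation to the lines $t\mapsto x+te_i$ shows that along a.e.\ such line, every $v\in D(x+te_i)$ has $v_i$ equal to the (a.e.-defined) derivative of $t\mapsto g(x+te_i)$, and a Fubini argument over the $d$ coordinate directions then forces, for a.e.\ $x$, the set $D(x)$ to reduce to the single vector $\nabla g(x)$. Combined with graph-closedness of $D$ and the definition of $\partial_C g$ as the closed convex hull of limits of gradients of $g$, this yields $\partial_C g(x)\subseteq\overline{\mathrm{conv}}\,D(x)$ for every $x$. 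Finally, for a loop $\gamma$ the loop-reversal observation (applied to $D$) makes $\langle\dot\gamma(s),\cdot\rangle$ constant on $D(\gamma(s))$, hence on $\overline{\mathrm{conv}}\,D(\gamma(s))\supseteq\partial_C g(\gamma(s))$, with common value $(g\circ\gamma)'(s)$; therefore $\int_0^1\max_{v\in\partial_C g(\gamma(s))}\langle\dot\gamma(s),v\rangle\dd s=g(\gamma(1))-g(\gamma(0))=0$, so $\partial_C g$ is conservative.

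I expect the main obstacle to be the step (i)$\Rightarrow$(ii), precisely the a.e.\ identification $D=\{\nabla g\}$ for a conservative field $D$ with potential $g$; this is where the full strength of the conservativity axiom is genuinely used, via the coordinate-line/Fubini argument, and it is the piece one would otherwise simply cite from \citet{bolte2021conservative}. The remaining nontrivial input to be invoked rather than reproved is the Clarke chain-rule inclusion $\partial_C(g\circ\gamma)(s)\subseteq\{\langle\dot\gamma(s),v\rangle:v\in\partial_C g(\gamma(s))\}$ at points of differentiability of $\gamma$, which is standard non-smooth calculus.
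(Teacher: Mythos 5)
The paper offers no proof of this proposition: it is quoted directly from \citet{bolte2021conservative}, Corollary 2, so there is no internal argument to compare against, and your reconstruction follows essentially the route of that reference. The argument is sound: the loop-reversal observation is the right engine; (ii)$\Rightarrow$(iii) correctly combines it with the two-sided bound $\min_{v\in\partial_C g(\p(s))}\langle v,\dot\p(s)\rangle\le(g\circ\p)'(s)\le\max_{v\in\partial_C g(\p(s))}\langle v,\dot\p(s)\rangle$ at points of joint differentiability (valid since $\partial_C g(\p(s))$ is convex and compact, so the set of pairings is an interval); and (iii)$\Rightarrow$(i) is immediate from the fundamental theorem of calculus for absolutely continuous functions. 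One step in (i)$\Rightarrow$(ii) is stated too quickly, and it is precisely the point you identified as the crux: having shown $D(x)=\{\nabla g(x)\}$ for all $x$ outside a Lebesgue-null set $N$, you deduce $\partial_C g(x)\subseteq\overline{\mathrm{conv}}\,D(x)$ from ``the definition'' of $\partial_C g$ together with graph-closedness of $D$. With the definition \ref{eqn:clarke} as written, the approximating sequences $x_k\to x$ may lie entirely inside $N$, where $\nabla g(x_k)$ need not belong to $D(x_k)$, so graph-closedness gives nothing. You need the standard but nontrivial fact that the Clarke subdifferential is insensitive to null sets, i.e. $\partial_C g(x)=\mathrm{conv}\left\{\lim_k\nabla g(x_k)\ :\ x_k\to x,\ x_k\in\mathcal{D}_g\setminus N\right\}$ for any Lebesgue-null $N$ (\citet{clarke1990optimization}, Theorem 2.5.1). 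With that ingredient named (and a word on measurability of the exceptional set in the Fubini step), the proof is complete.
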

This equivalence justifies the terminology used in \ref{cond:A5}. The reader seeking a complete presentation of conservative field theory may refer to \citep{bolte2021conservative}.

\subsection{Conservative Mappings}\label{sec:conservative_mappings}

The notion of conservative fields for real-valued locally Lipschitz functions $g: \R^d \longrightarrow \R$ can be generalised to \textit{conservative mappings} for vector-valued locally Lipschitz functions $g: \R^p \longrightarrow \R^q$, which one may see as a generalised Jacobian matrix (see \citep{bolte2021conservative}, Section 3.3 for further details). A set-valued map $J: \R^p \rightrightarrows \R^{q\times p}$ is a conservative mapping for such a $g$ if
\begin{equation}\label{eqn:conservative_mappings}
	\forall \p \in \mathcal{C}_{\mathrm{abs}}(\R_+, \R^p),\; \ull{\forall} s > 0,\; (g\circ \p)'(s) = M\dot \p(t),\; \forall M \in  J(\p(s)).
\end{equation}
In this case, we shall say that $g$ is path differentiable. Note that if each coordinate function $g_i$ is the potential of a conservative field $D_i$, then the set-valued map 
$$J(\p) = \left\lbrace \left(\begin{array}{c}
	v_1^\top\\
	\vdots\\
	v_q^\top\\
\end{array}\right)\ :\ \forall i \in \llbracket 1, q \rrbracket,\; v_i \in D_i(\p) \right\rbrace$$ 
is a conservative mapping for $g$ (although not all conservative mappings for $g$ can be written in this manner). As a consequence, one could interpret (simplistically) vector-valued path differentiability as coordinate-wise path differentiability.

\subsection{Clarke Regularity}\label{sec:Clarke_regular}

Another notion of regularity for locally Lipschitz functions is that of \textit{Clarke regularity}. Let $g: \R^p \longrightarrow \R^q$ and $\p \in \R^p$, $g$ is said to be \textit{Clarke regular} at $\p$ if the two quantities
$$g^\circ(\p; v) := \underset{\substack{\p' \rightarrow \p \\ t \searrow 0}}{\operatorname{limsup}} \cfrac{g(\p'+tv) - g(\p')}{t} \quad \text{and} \quad g'(\p;v) := \underset{t \searrow 0}{\operatorname{lim}}\cfrac{g(\p+tv)-g(\p)}{t}$$
exist and are equal for all $v \in \R^p$. Note that this notion implies path differentiability by \citep{bolte2021conservative}, Proposition 2. Clarke regularity is the central concept of Clarke's monograph \citep{clarke1990optimization}.

\redtwo{\subsection{Semi-Algebraic Functions}\label{sec:semi_algebraic}}

In non-smooth analysis, one of the simplest regularity cases is the class of \textit{semi-algebraic} functions, which are essentially piecewise polynomial functions defined on polynomial pieces. To be precise, a set $\mathcal{A} \subset \R^d$ is \textit{semi-algebraic} if it can be written under the form
$$\mathcal{A} = \Reu{i=1}{n}\Inter{j=1}{m}\left\lbrace \p \in \R^d\ |\ P_{i,j}(\p) < 0,\; Q_{i,j}(\p) = 0 \right\rbrace,$$
where the $P_{i,j}$ and $Q_{i,j}$ are real multivariate polynomials. A function $g: \R^p \longrightarrow \R^q$ is \textit{semi-algebraic} if its graph $\mathcal{G} := \lbrace (\p, g(\p))\ |\ \p \in \R^p \rbrace$ is semi-algebraic.

A locally Lipschitz real-valued semi-algebraic function is path differentiable (see for instance \citep{bolte2021conservative}, Proposition 2), and in the light of \citep{bolte2021conservative}, Lemma 3, this is also the case in the vector-valued case. Another useful property of semi-algebraic functions is that their class is stable by composition and product. The interested reader may consult \citep{Wakabayashi_semialgebraic} for additional properties of semi-algebraic objects, or \citep{coste1999introduction, van1996geometric}, for a presentation of o-minimal structures, a generalisation of this concept.

\redtwo{\section{Suitable Neural Networks}\label{sec:suitable_NNs}}

In this section, we detail our claim that typical NN structures satisfy our conditions. To this end, we define a class of practical neural networks whose properties are sufficient (not all NNs that satisfy our assumptions are within this framework). Consider $\classT$ the set of NNs $T$ of the form
$$T: \app{\R^{\dimp}\times \R^{\dimx}}{\R^\dimy}{(\p,x)}{\widetilde{T}(\p, x)\mathbbold{1}^\varepsilon_{B(0, R_\p)}(\p)\mathbbold{1}^\varepsilon_{B(0, R_x)}(x)},$$
with $R_\p, R_x > 0$ and $\varepsilon > 0$. The function $\mathbbold{1}^\varepsilon_{B(0, R)}$ is a smoothed version of the usual indicator function $\mathbbold{1}_{B(0, R)}$: it is any function that has value 1 in $B(0, R - \varepsilon)$, 0 outside $B(0, R + \varepsilon)$ and is $\mathcal{C}^2$-smooth (see \ref{rem:smooth_indicator} for a possible construction). Given that one may take arbitrarily large radii, these indicators are added for theoretical purposes and impose no realistic constraints in practice. Additionally, $\widetilde{T} = h_N$, the $\nlayers$-th layer of a recursive NN structure defined by
$$\layerout_0(\p, x) = x, \quad \forall n \in \llbracket 1, \nlayers \rrbracket,\; \layerout_n = \app{\R^{\dimp}\times \R^{\dimx}}{\R^{d_n}}{(\p,x)}{\activ_n\left(\Sum{i=0}{n-1}\linunit_{n,i}(\p)\layerout_i(\p,x) + \intercept_n \p \right)},$$
where:
\begin{itemize}
	\item \textit{All} functions $\activ_n: \R \longrightarrow \R$ are $\mathcal{C}^2$-smooth, or \textit{all} locally Lipschitz semi-algebraic activation functions (applied entry-wise). The former condition is satisfied by the common sigmoid, hyperbolic tangent or softplus activations. The latter condition applies to the non-differentiable ReLU activation, its "Leaky ReLU" extension, and continuous piecewise polynomial activations. Note that other non-linearities such as softmax can also be considered under the same regularity restrictions, but we limit ourselves to entry-wise non-linearities for notational consistency.
	\item Each dimension $d_n$ is a positive integer, with obviously $d_{\nlayers} = \dimy$, the output dimension.
	\item Each $\linunit_{n, i}$ is a linear map: $\R^\dimp \longrightarrow \R^{d_n \times d_i}$, which maps a parameter vector $\p$ to a $d_n \times d_i$ matrix. Since the entire parameter vector $\p$ is given at each layer, this allows the architecture to only use certain parameters at each layer (as is more typical in practice). One may see this map as a 3-tensor of shape $(d_n, d_i, \dimp)$, as specified in the formulation
	\begin{equation}\label{eqn:A_tensor_form}
		\forall \p \in \R^\dimp,\; \forall \layerout \in \R^{d_i},\; \linunit_{n, i}(u)\layerout = \left(\Sum{j_2=1}{d_i}\Sum{j_3=1}{\dimp}\linunit_{j_1, j_2, j_3}^{(n, i)}\layerout_{j_2}\p_{j_3}\right)_{j_1 \in \llbracket 1, d_n \rrbracket} \in \R^{d_n}.
	\end{equation}
	\item The matrix $\intercept_n \in \R^{d_n \times \dimp}$ determines the intercept from the full parameter vector $\p$.
\end{itemize}

In this model, each layer depends on all the previous layers, allowing for residual inputs for instance. Overall, all typical networks fit this description, once bounded using the indicator functions, with only a technicality on the regularities of the activations which need to be \textit{all} $\mathcal{C}^2$-smooth, or \textit{all} semi-algebraic. One could extend this class of NNs to those with \textit{definable} activations within the same o-minimal structure (similarly to \citet{davis2020stochastic} and \citet{bolte2021conservative}).%

\begin{remark}\label{rem:smooth_indicator} We mention that we may construct a $\mathcal{C}^\infty$-smooth $\mathbbold{1}^\varepsilon_{B(0, R)}$ in $\R^d$ explicitly as follows:
	$$f(s):= \left\lbrace\begin{array}{c}
		e^{-1/s}\text{\ if\ }s>0 \\
		\text{else\ } 0
	\end{array} \right.,\quad g(s):=\cfrac{f(s)}{f(s)+f(1-s)},\quad \mathbbold{1}^\varepsilon_{B(0, R)} := \app{\R^d}{[0, 1]}{\p}{g\left(\cfrac{(R + \varepsilon)^2 - \|\p\|_2^2}{4R\varepsilon}\right)}. $$
\end{remark}

Before proving the properties of NNs from the class $\classT$, we require a technical result on path differentiable functions.

\begin{prop}\label{prop:prod_path_diff} Let $f: \R^d\longrightarrow\R$ path differentiable, and $g:\R^d\longrightarrow\R$ of class $\mathcal{C}^1$. Then their product $fg$ is path differentiable. 
\end{prop}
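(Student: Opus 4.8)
The goal is to show that if $f$ is path differentiable and $g$ is $\mathcal{C}^1$, then $fg$ is path differentiable. The natural strategy is to exhibit an explicit conservative field for $fg$ built out of a conservative field $D_f$ for $f$ (which exists since $f$ is path differentiable; WLOG take $D_f = \partial_C f$ by \ref{prop:equiv_path_regular}) and the genuine gradient $\nabla g$ of $g$. The candidate is the set-valued map
\begin{equation*}
	D(\p) := g(\p)\, D_f(\p) + f(\p)\,\{\nabla g(\p)\} = \left\lbrace g(\p) v + f(\p)\nabla g(\p)\ :\ v \in D_f(\p)\right\rbrace.
\end{equation*}
First I would check that $D$ has nonempty compact values (immediate, since $D_f(\p)$ is nonempty compact and the operations $v \mapsto g(\p)v + f(\p)\nabla g(\p)$ are continuous affine maps) and that $D$ is graph closed: this follows from graph closedness of $D_f$ together with the continuity of $g$, $f$ and $\nabla g$ (a standard argument: take a convergent sequence in the graph, extract the $D_f$-component, use compactness and graph closedness to pass to the limit).

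\textbf{The main step} is verifying the loop integral condition, i.e. that for every absolutely continuous loop $\gamma \in \mathcal{C}_{\mathrm{abs}}([0,1],\R^d)$ with $\gamma(0) = \gamma(1)$ one has $\int_0^1 \max_{w \in D(\gamma(s))} \langle \dot\gamma(s), w\rangle\, \dd s = 0$. The key observation is that for almost every $s$, since $\nabla g(\gamma(s))$ is single-valued,
\begin{equation*}
	\max_{w \in D(\gamma(s))} \langle \dot\gamma(s), w\rangle = g(\gamma(s)) \max_{v \in D_f(\gamma(s))}\langle \dot\gamma(s), v\rangle\, \mathbbold{1}_{\{g(\gamma(s)) \geq 0\}} + \cdots
\end{equation*}
— actually it is cleaner to argue pointwise via the chain rule rather than by splitting signs. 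Since $f$ is path differentiable, by the chain rule formulation in \ref{prop:equiv_path_regular} (Equation \ref{eqn:chaine_rule}), for almost every $s$ and \emph{all} $v \in \partial_C f(\gamma(s))$ we have $\langle \dot\gamma(s), v\rangle = (f\circ\gamma)'(s)$; in particular $\max_{v \in D_f(\gamma(s))}\langle\dot\gamma(s),v\rangle = (f\circ\gamma)'(s)$. Likewise $\langle \dot\gamma(s), \nabla g(\gamma(s))\rangle = (g\circ\gamma)'(s)$ for a.e. $s$. Hence for a.e. $s$, \emph{every} element $w = g(\gamma(s))v + f(\gamma(s))\nabla g(\gamma(s)) \in D(\gamma(s))$ satisfies
\begin{equation*}
	\langle \dot\gamma(s), w\rangle = g(\gamma(s))(f\circ\gamma)'(s) + f(\gamma(s))(g\circ\gamma)'(s) = (fg\circ\gamma)'(s),
\end{equation*}
so the $\max$ equals $(fg\circ\gamma)'(s)$, and integrating over the loop gives $\int_0^1 (fg\circ\gamma)'(s)\,\dd s = (fg)(\gamma(1)) - (fg)(\gamma(0)) = 0$. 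The same computation, run on a path from $0$ to an arbitrary $\p$ rather than a loop, simultaneously shows $fg$ is a potential of $D$, so $fg$ is path differentiable.

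\textbf{Anticipated obstacle.} The genuinely delicate point is the measurability/a.e.-differentiability bookkeeping: one needs that $f\circ\gamma$ and $g\circ\gamma$ are differentiable at almost every $s$ (true — $f$ is locally Lipschitz hence $f\circ\gamma$ is absolutely continuous, and $g$ is $\mathcal{C}^1$), that the chain rule identity $\langle\dot\gamma(s),v\rangle = (f\circ\gamma)'(s)$ genuinely holds for \emph{all} $v \in \partial_C f(\gamma(s))$ on a common full-measure set of $s$ (this is exactly the content of \ref{cond:A5}/\ref{eqn:chaine_rule} applied to $f$), and that $s \mapsto \max_{w\in D(\gamma(s))}\langle\dot\gamma(s),w\rangle$ is measurable so that the integral makes sense (follows from the closed-graph/compact-values structure plus a standard measurable-selection or $\limsup$-of-measurable argument). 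An alternative, possibly slicker route avoiding building the field by hand: note that $fg = \frac{1}{4}\big((f+g)^2 - (f-g)^2\big)$ won't simplify things since $g$ is only $\mathcal{C}^1$, not path-controlled in a product — so I would instead invoke stability of path differentiability under composition (\citep{bolte2021conservative}, Lemma 6) applied to the $\mathcal{C}^1$ (hence path differentiable) map $(a,b)\mapsto ab$ precomposed with the path differentiable \emph{mapping} $\p \mapsto (f(\p), g(\p))$ — but this requires the vector-valued/conservative-mapping machinery of \ref{sec:conservative_mappings}, so the direct construction above is likely the cleaner self-contained argument to present.
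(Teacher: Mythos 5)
Your argument is correct, and its engine is the same pointwise identity the paper uses: for a.e.\ $s$ along an absolutely continuous curve, every $v = g(\gamma(s))w + f(\gamma(s))\nabla g(\gamma(s))$ with $w \in \partial_C f(\gamma(s))$ satisfies $\langle v, \dot\gamma(s)\rangle = g(\gamma(s))(f\circ\gamma)'(s) + f(\gamma(s))(g\circ\gamma)'(s) = (fg\circ\gamma)'(s)$, by the chain rule for $f$ (path differentiability) and for $g$ (smoothness). The packaging differs: the paper identifies the set $f\nabla g + g\,\partial_C f$ with $\partial_C(fg)$ (a Clarke product rule valid because one factor is $\mathcal{C}^1$, asserted without proof there) and then verifies the chain-rule characterisation of \ref{prop:equiv_path_regular} for $\partial_C(fg)$ directly; you instead take that same set as a candidate field $D$, check it is a conservative field from the definition (compact nonempty values, graph closedness, vanishing loop integrals), and exhibit $fg$ as its potential. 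Your route buys independence from the Clarke product rule --- you never need $D = \partial_C(fg)$, only that \emph{some} conservative field admits $fg$ as a potential --- at the cost of the routine graph-closedness and measurability verifications, which you correctly flag and which do go through. The paper's route is shorter because the equivalence of \ref{prop:equiv_path_regular} absorbs all of that bookkeeping. The half-finished display with the sign indicator should simply be deleted, since you abandon it in favour of the pointwise argument; and your closing remark about the composition route via a conservative mapping for $\p \mapsto (f(\p), g(\p))$ is a legitimate third proof, correctly identified as needing the vector-valued machinery of \ref{sec:conservative_mappings}.
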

\begin{proof}
	Our objective is to apply \citep{bolte2021conservative} Corollary 2 (stated in \ref{prop:equiv_path_regular}), which is to say that $h := fg$ admits a chain rule for $\partial_C h$). First, we apply the definition of the Clarke differential and compute
	$$\forall \p \in \R^d,\;\partial_C f(\p) = f(\p)\nabla g(\p) + g(\p)\partial_Cf(\p) := \left\lbrace f(\p)\nabla g(\p) + g(\p)v\ |\ v \in \partial_Cf(\p)\right\rbrace.$$
	Note that we used the smoothness of $g$. We now consider an absolutely continuous curve $\p \in \mathcal{C}_{\mathrm{abs}}(\R_+, \R^d)$. By \cite{bolte2021conservative} Lemma 2, since $f$ is path differentiable, $f\circ \p$ is differentiable almost everywhere. Let $D$ the associated set of differentiability, then let $s\in D$ and $v \in \partial_C h(\p(s))$, writing $v = f(\p(s))\nabla g(\p(s)) + g(\p(s))w$ with $w \in \partial_Cf(\p(s))$. We compute $(h\circ\p)'(s) = (f\circ \p)'(s)g(\p(s)) + f(\p(s))(g\circ \p)'(s)$. Now since $f$ is path differentiable and $w \in \partial_Cf(\p(s))$, by \ref{prop:equiv_path_regular} item 3, we have $(f\circ \p)'(s) = \langle w, \dot \p(s)\rangle$. On the other hand, $(g\circ \p)'(s) = \langle \nabla g(\p(s)), \dot \p(s)\rangle$ since $g$ is $\mathcal{C}^1$. Finally by definition of $v$ and bilinearity of $\langle \cdot, \cdot \rangle$,
	$$(h\circ\p)'(s) = \langle w, \dot \p(s)\rangle g(\p(s)) + f(\p(s))\langle \nabla g(\p(s)), \dot \p(s)\rangle = \langle v, \dot \p(s)\rangle.$$
\end{proof}
We now have all the tools to prove that the class of NNs $\classT$ satisfies all of the assumptions of our paper.

\begin{prop} All networks of the class $\classT$ verify \ref{ass:C2_ae}, \ref{ass:T_loclip}, \ref{ass:T_slow_increase}, \ref{ass:d2T2_and_dT_bounded} and \ref{ass:T_path_diff}.
\end{prop}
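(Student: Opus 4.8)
The plan is to check the five assumptions in turn, using throughout that the cut-off factors $\mathbbold{1}^\varepsilon_{B(0,R_\p)}$ and $\mathbbold{1}^\varepsilon_{B(0,R_x)}$ are $\mathcal{C}^2$, valued in $[0,1]$, and compactly supported (hence bounded, with bounded first and second derivatives), so that $T$ has compact support, is bounded, and inherits its regularity from $\widetilde T=\layerout_\nlayers$. The regularity of $\widetilde T$ is obtained by induction over the layers: each $\layerout_n$ is built from $\layerout_0,\dots,\layerout_{n-1}$ by multiplication by the linear-in-$\p$ units $\linunit_{n,i}(\p)$, addition of the linear term $\intercept_n\p$, and entrywise post-composition with $\activ_n$. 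I treat the two admissible regimes separately: (i) all $\activ_n$ of class $\mathcal{C}^2$; (ii) all $\activ_n$ locally Lipschitz and continuous piecewise polynomial (the ReLU / Leaky-ReLU / piecewise-polynomial examples). I flag at the outset that the main obstacle is \ref{ass:d2T2_and_dT_bounded}, which is also the single point where regime (ii) uses piecewise-polynomiality rather than general semi-algebraicity.

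For \ref{ass:T_loclip}, \ref{ass:T_slow_increase} and \ref{ass:C2_ae}: in either regime each $\activ_n$ is locally Lipschitz, so by stability of local Lipschitzness under sums, products and compositions $\widetilde T$ is jointly locally Lipschitz, and hence so is $T$ (a product of $\widetilde T$ with $\mathcal{C}^2$ factors); as $T$ has compact support it is globally Lipschitz (a locally Lipschitz function of compact support is), and rewriting the modulus in terms of the sum of the two norms gives \ref{ass:T_loclip}. For \ref{ass:T_slow_increase}, the constant $g\equiv\|T\|_\infty$ is $\mx$-integrable and satisfies the required bound trivially. For \ref{ass:C2_ae}: in regime (i), $\widetilde T(\cdot,x)\in\mathcal{C}^2(\R^\dimp)$ (composition/product of $\mathcal{C}^2$ maps), so $T(\cdot,x)\in\mathcal{C}^2(\R^\dimp)$, $J(x)$ is a singleton and $\U_1(x)=\R^\dimp$; in regime (ii), $\widetilde T(\cdot,x)$ is locally Lipschitz and semi-algebraic, hence $\mathcal{C}^2$ off a semi-algebraic set $Z(x)\subset\R^\dimp$ of dimension $<\dimp$, which is $\bblambda_{\R^\dimp}$-negligible and whose complement has finitely many open connected components; these are the $\U_j(x)$, and one verifies $\bigcup_j\oll{\U_j(x)}=\R^\dimp$ (a finite union of closures of components of the dense open set $\R^\dimp\setminus Z(x)$) and $\bigcup_j\partial\U_j(x)\subseteq Z(x)$, still negligible.

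For \ref{ass:d2T2_and_dT_bounded} --- the crux --- I must bound $\|\partial^2_{\p_{i_1}\p_{i_2}}T(\p,x)\|_2$ and $|\partial^2_{\p_{i_1}\p_{i_2}}([T(\p,x)]_{i_3}[T(\p,x)]_{i_4})|$ uniformly over $x\in\X$, $j\in J(x)$ and $\p\in\U_j(x)$; there are finitely many index tuples, so it suffices to treat each. In regime (i), $\widetilde T$ is jointly $\mathcal{C}^2$, hence so are $T$ and every product $[T]_{i_3}[T]_{i_4}$, and these are compactly supported, so their second $\p$-derivatives are continuous with compact support, hence bounded, a fortiori uniformly over the compact $\X$. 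In regime (ii), the recursion shows $\widetilde T$ is \emph{jointly} piecewise polynomial: there is a finite semi-algebraic partition of $\R^\dimp\times\R^\dimx$ on each cell of which $\widetilde T(\p,x)$ equals one of finitely many polynomials $Q_\ell(\p,x)$ (compose the polynomial linear units with the polynomial branches of the $\activ_n$). On a cell, the relevant second $\p$-derivative of $T$ (resp. of $[T]_{i_3}[T]_{i_4}$) agrees with that of $Q_\ell(\p,x)\,\mathbbold{1}^\varepsilon_{B(0,R_\p)}(\p)\,\mathbbold{1}^\varepsilon_{B(0,R_x)}(x)$ (resp. of the corresponding product), a globally defined continuous function supported in the compact box $\oll{B}(0,R_\p+\varepsilon)\times\oll{B}(0,R_x+\varepsilon)$, hence bounded; the maximum over the finitely many cells and index tuples is an admissible $\MddT$. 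This step genuinely uses piecewise-polynomiality: $t\mapsto\sign(t)|t|^{3/2}$ is locally Lipschitz and semi-algebraic yet has $\activ''$ unbounded near $0$, which would violate \ref{ass:d2T2_and_dT_bounded}; this is the intended reading of the ``semi-algebraic'' clause, consistent with the listed examples.

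For \ref{ass:T_path_diff}, I run the layer induction using \ref{prop:prod_path_diff} and the stability of path differentiability under finite sums and compositions (as used in the proof of \ref{prop:F_path_diff}): $\layerout_0(\cdot,x)$ is constant, hence path differentiable; if $\layerout_0,\dots,\layerout_{n-1}$ are path differentiable, then each entry of $\linunit_{n,i}(\p)\layerout_i(\p,x)$ is a product of a $\mathcal{C}^1$ function of $\p$ with a path-differentiable one, hence path differentiable by \ref{prop:prod_path_diff}, and summing over $i$, adding $\intercept_n\p$, and post-composing entrywise with $\activ_n$ --- which is $\mathcal{C}^1$, or locally Lipschitz semi-algebraic, hence path differentiable in both regimes --- preserves path differentiability; thus $\widetilde T(\cdot,x)$ is path differentiable, and one further application of \ref{prop:prod_path_diff} (multiplying by the $\mathcal{C}^2$ cut-off $\mathbbold{1}^\varepsilon_{B(0,R_\p)}(\cdot)$ and by the scalar $\mathbbold{1}^\varepsilon_{B(0,R_x)}(x)$) shows $T(\cdot,x)$ is path differentiable. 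Alternatively \ref{ass:T_path_diff} follows directly from the path-differentiability of feed-forward networks with semi-algebraic activations established by \citet{davis2020stochastic} and \citet{bolte2021conservative}.
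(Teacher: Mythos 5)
Your proof is correct and follows the same overall structure as the paper's: a case split between the all-$\mathcal{C}^2$ and the semi-algebraic regimes, with each assumption checked in turn. The minor differences are cosmetic — you invoke generic stability of local Lipschitzness under sums, products and compositions where the paper runs an explicit layer-by-layer induction with the tensor form \ref{eqn:A_tensor_form} for \ref{ass:T_loclip}, and you take a constant $g$ in \ref{ass:T_slow_increase} where the paper takes an $x$-dependent one; both are fine.

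The one substantive divergence is in your treatment of \ref{ass:d2T2_and_dT_bounded} in the non-smooth regime, and there your version is actually \emph{more} careful than the paper's. The paper asserts that, $\widetilde{T}$ being jointly semi-algebraic, there is a finite family of open sets on each of which $\widetilde{T}$ is \emph{polynomial} in $(\p,x)$ — but semi-algebraicity alone does not give piecewise polynomiality, and your counterexample $t \longmapsto \sign(t)|t|^{3/2}$ (locally Lipschitz, $\mathcal{C}^1$, semi-algebraic, with second derivative blowing up at $0$) shows that the class $\classT$ as literally defined, with arbitrary locally Lipschitz semi-algebraic activations, need not satisfy \ref{ass:d2T2_and_dT_bounded}. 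Restricting to piecewise polynomial activations (which covers all the examples the paper lists: ReLU, Leaky ReLU, continuous piecewise polynomials) is exactly the right fix, and your cell-by-cell bound on the compact support of $T$ then goes through. So you have not merely reproduced the argument — you have identified and repaired a gap in it; it would be worth stating explicitly in the proposition that the semi-algebraic branch should read ``piecewise polynomial'' (or more generally, piecewise $\mathcal{C}^2$ with uniformly bounded second derivatives on the pieces).
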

\begin{proof}
	Let $T \in \classT$, and $\widetilde{T}$ its associated underlying network. We begin with regularity considerations. 
	
	\paragraph{Verifying Assumptions 1 and 7 in the $\mathcal{C}^2$ Case} In the case where the activations are $\mathcal{C}^2$-smooth, then each $\widetilde{T}(\cdot, x)$ is also of class $\mathcal{C}^2$. Furthermore, the smooth indicator $\mathbbold{1}^\varepsilon_{B(0, R_\p)}$ is $\mathcal{C}^\infty$-smooth, thus we can conclude that $T(\cdot, x)$ is $\mathcal{C}^2$-smooth, and thus satisfies \ref{ass:C2_ae} trivially. In particular, $T(\cdot, x)$ is path differentiable for any $x \in \R^\dimx$, thus $T$ also satisfies \ref{ass:T_path_diff}.
	
	\paragraph{Verifying Assumptions 1 and 7 in the Semi-Algebraic Case} In the case where the activations are locally Lipschitz and semi-algebraic, it follows that each $\widetilde{T}(\cdot, x)$ is semi-algebraic, which yields naturally a differentiability structure associated to the polynomial pieces, satisfying \ref{ass:C2_ae}. Furthermore, this regularity yields path differentiability by \citep{bolte2021conservative}, Proposition 2. By product with the smooth indicator function, $T$ is path differentiable by \ref{prop:prod_path_diff}, therefore it satisfies \ref{ass:T_path_diff} .
	
	\paragraph{Verifying Assumption 2 in the $\mathcal{C}^2$ Case} In the case where the activations are $\mathcal{C}^2$-smooth, it is clear that by composition and product $(\p, x) \longmapsto \widetilde{T}(\p, x)$ is \textit{jointly} $\mathcal{C}^2$-smooth. As a consequence, it is Lipschitz jointly in $(\p, x)$ on any compact of $\R^{\dimp}\times\R^{\dimy}$, and by product with the smooth indicators, so is $T$. Since $T$ is zero outside $\oll{B}(0, R_\p+\varepsilon)\times\oll{B}(0, R_x+\varepsilon)$, we conclude that it is globally Lipschitz in $(\p, x)$.
	
	\paragraph{Verifying Assumption 2 in the Semi-Algebraic Case} In the case of locally Lipschitz and semi-algebraic activations, we prove that $\widetilde{T}$ is jointly Lipschitz on any compact $\mathcal{K}$ by strong induction on $n \in \llbracket 1, \nlayers \rrbracket$. Let $\mathcal{K} = \mathcal{K}_1 \times \mathcal{K}_2$ a product compact of $\R^{\dimp}\times\R^{\dimy}$, and $P_n:$ "$\exists \lipT_n >0: \layerout_n$ is $\lipT_n$-Lipschitz on $\mathcal{K}$". The initialisation $P_0$ is trivial, since $z(\p, x) = x$. Let $n \in \llbracket 1, \nlayers \rrbracket$ and assume $P_i$ to hold true for $i \in \llbracket 0, n-1 \rrbracket$. In particular, the $\layerout_i$ are jointly continuous in $(\p, x)$, allowing the definition of 
	$$M := \underset{(\p, x) \in \mathcal{K}}{\max}\ \left|\Sum{i=0}{n-1}\linunit_{n,i}(\p)\layerout_i(\p,x) + \intercept_n \p\right|.$$
	Since $\activ_n$ is locally Lipschitz, a covering argument shows that there exists $\lipT_{\activ_n}>0$ such that $\activ_n$ is $\lipT_{\activ_n}$-Lipschitz on $[-M, M]$. Now let $(\p_1, \p_2) \in \mathcal{K}_1^2$ and $(x_1, x_2) \in \mathcal{K}_2^2$. We have
	\begin{align}\label{eqn:show_T_loclip1}
		\|\layerout_n(\p_1, x_1) - \layerout_n(\p_2, x_2)\|_2 &\leq \lipT_{\activ_n}\left\|\Sum{i=0}{n-1}\linunit_{n,i}(\p_1)\layerout_i(\p_1,x_1) + \intercept_n \p_1 - \Sum{i=0}{n-1}\linunit_{n,i}(\p_2)\layerout_i(\p_2,x_2) - \intercept_n \p_2\right\|_2 \nonumber\\
		&\leq \lipT_{\activ_n} \left(\opn{\intercept_n}\|\p_1 - \p_2\|_2 + \Sum{i=0}{n-1}\left\|\linunit_{n,i}(\p_1)\layerout_i(\p_1,x_1) - \linunit_{n,i}(\p_2)\layerout_i(\p_2,x_2)\right\|_2\right),
	\end{align}
	where $\opn{\cdot}$ denotes the $\|\cdot\|_2$-induced operator norm. Let $i \in \llbracket 0, n-1 \rrbracket$, we separate the norm:
	\begin{align}\label{eqn:show_T_loclip2}
		\left\|\linunit_{n,i}(\p_1)\layerout_i(\p_1,x_1) - \linunit_{n,i}(\p_2)\layerout_i(\p_2,x_2)\right\|_2 & \leq \left\|\linunit_{n,i}(\p_1)\layerout_i(\p_1,x_1) - \linunit_{n,i}(\p_2)\layerout_i(\p_1,x_1)\right\|_2 =: \Delta_1 \nonumber\\
		& + \left\|\linunit_{n,i}(\p_2)\layerout_i(\p_1,x_1) - \linunit_{n,i}(\p_2)\layerout_i(\p_2,x_2)\right\|_2 =: \Delta_2.
	\end{align}
	For $\Delta_1$, use the tensor form \ref{eqn:A_tensor_form} and the inequality $\|x\|_2 \leq \sqrt{d}\|x\|_\infty$ for $x\in \R^d$, then $\|\p\|_\infty \leq \|\p\|_2$:
	\begin{align}\label{eqn:show_T_loclip3}
		\Delta_1 &\leq \sqrt{d_n}\left\|\left(\Sum{j_2=1}{d_i}\Sum{j_3=1}{\dimp}\linunit_{j_1, j_2, j_3}^{(n, i)}\layerout_i(u_1,x_1)_{j_2}(\p_{j_3}^{(1)}-\p_{j_3}^{(2)})\right)_{j_1 \in \llbracket 1, d_n \rrbracket}\right\|_\infty \nonumber\\
		&\leq \sqrt{d_n}\underset{j_1,j_2,j_3}{\max}\ |A_{j_1,j_2,j_3}^{(n,i)}|\ \underset{(\p,x) \in \mathcal{K}_1\times\mathcal{K}_2}{\max}\ \|\layerout_i(\p,x)\|_\infty\ \|\p_1 - \p_2\|_\infty \nonumber\\
		&\leq \lipT_{\Delta_1}\|\p_1 - \p_2\|_2.
	\end{align}
	For $\Delta_2$, we leverage $P_i$ and obtain
	\begin{equation}\label{eqn:show_T_loclip4}
		\Delta_2 \leq \underset{\p \in \mathcal{K}_1}{\max}\ \opn{\linunit_i(\p)}\ \|\layerout_i(\p_1, x_1) - \layerout_i(\p_2, x_2)\|_2 \leq \underset{\p \in \mathcal{K}_1}{\max}\ \opn{\linunit_i(\p)}\ \lipT_i\left(\|\p_1 - \p_2\|_2 + \|x_1 - x_2\|_2\right).
	\end{equation}
	
	Combining \ref{eqn:show_T_loclip1} \ref{eqn:show_T_loclip2} \ref{eqn:show_T_loclip3} and \ref{eqn:show_T_loclip4} shows $P_n$ and concludes the induction, which in turn shows that $\widetilde{T}$ is jointly Lipschitz on any compact. Like in the smooth case, we conclude that $T$ is globally Lipschitz, and thus that \ref{ass:T_loclip} holds.
	
	\paragraph{Verifying Assumption 4} Under both cases of regularity for the activations, 
	$$g := x \longmapsto \underset{\p \in \oll{B}(0, R_\p + \varepsilon)}{\max}\ \|\widetilde{T}(\p, x)\|_2\mathbbold{1}^\varepsilon_{B(0, R_x)}(x)$$
	is measurable and bounded. Furthermore, observe that for $\p, x \in \R^\dimp \times \R^\dimx,\; \|T(\p,x)\|_2 \leq g(x)$. As a consequence, \ref{ass:T_slow_increase} holds. 
	
	\paragraph{Verifying Assumption 5 in the $\mathcal{C}^2$ case} If all activations are $\mathcal{C}^2$-smooth, both $\widetilde{T}$ and its coordinate-wise products $T_i\times T_j$ are $\mathcal{C}^2$-smooth jointly in $(\p, x)$. As a result, one may bound these terms on $(\p, x) \in \oll{B}(0, R_\p + \varepsilon) \times \oll{B}(0, R_x+\varepsilon)$ by a constant $M$, independent of $\p, x$, and the assumption is verified. 
	
	\paragraph{Verifying Assumption 5 in the semi-algebraic case} In the semi-algebraic case, there exists a structure $(\mathcal{U}_j)_{j \in J}$ of open sets of $\R^\dimp \times \R^\dimx$ whose closures cover the entire space, such that $\widetilde{T}$ is polynomial in $(\p, x)$ on each $\mathcal{U}_j$, with $J$ finite (this is possible since $\widetilde{T}$ is jointly semi-algebraic). The NN can be written $T(\p, x) = \widetilde{T}(\p, x)\mathbbold{1}^\varepsilon_{B(0, R_\p)}(\p)\mathbbold{1}^\varepsilon_{B(0, R_x)}(x)$, and is therefore $\mathcal{C}^2$-smooth on each $\mathcal{U}_j$. Furthermore, its restriction to $\mathcal{U}_j$ is extendable $\mathcal{C}^2$-smoothly to $\oll{\mathcal{U}_j}$ (we shall not introduce a different notation to these extensions, for legibility). As a result, one may introduce the following bounds on the derivatives of the coordinate functions on the intersection of the compact $\mathcal{K} := \oll{B}(0, R_\p+\varepsilon)\times\oll{B}(0, R_x+\varepsilon)$ and $\oll{\mathcal{U}_j}$: there exists an $M_j>0$ such that
	$$\forall (\p,x) \in \mathcal{K}\ \cap\ \oll{\mathcal{U}_j},\; \left|\cfrac{\partial^2}{\partial \p_{i_1} \partial \p_{i_2}} \Big([T(\p, x)]_{i_3} [T(\p, x)]_{i_4}\Big)\right| \leq M_j\; \mathrm{and}\;\left\|\cfrac{\partial^2 T}{\partial \p_{i_1}\partial\p_{i_2}}(\p, x)\right\|_2 \leq M_j.$$
	Since $J$ is finite and the $(\mathcal{U}_j)_{j\in J}$ cover $\mathcal{K}$, we deduce that this bound holds for $(\p,x) \in \mathcal{K}$ for a common constant $M>0$. Moreover, since $T$ is the zero function outside of $\mathcal{K}$, this bounds also holds for any $(\p, x) \in \R^\dimp\times\R^\dimx$. Finally, this shows that \ref{ass:d2T2_and_dT_bounded} holds.	
\end{proof}

\redtwo{\section{Generalisation to Other Sliced Wasserstein Orders}\label{sec:other_p}}

In this section, we shall discuss how some of our results can be extended by replacing the 2-SW term $\SW_2^2$ with $\SW_p^p$ for $p\in [1, +\infty[$.

\paragraph{Determining Lipschitz Constants} The first difficulty lies in showing that the functions $w_{\theta}^{(p)}:=(X, Y)\longmapsto\W_p^p(P_\theta\#\bbgamma_X, P_\theta\#\bbgamma_Y)$ still have a locally Lipschitz regularity similar to \ref{prop:w_unif_locLip} (this proposition is only shown for $p=2$ in \citep{discrete_sliced_loss}). We generalise their result in the following proposition.

\begin{prop}\label{prop:wp_unif_loc_lip} Let $K^{(p)}_w(r, X, Y) := p\npoints(r + \|X\|_{\infty, 2} + \|Y\|_{\infty, 2})^{p-1}$, for $X, Y \in \R^{\npoints \times \dimy}$ and $r>0$. Then $w^{(p)}_\theta(\cdot, Y)$ is $K^{(p)}_w(r, X, Y)$-Lipschitz in the neighbourhood $B_{\|\cdot\|_{\infty, 2}}(X, r)$:
	$$\forall Y', Y'' \in B_{\|\cdot\|_{\infty, 2}}(X, r),\; \forall \theta \in \SS^{\dimy-1},\; |w_\theta(Y', Y) - w_\theta(Y'', Y)| \leq K^{(p)}_w(r, X, Y) \|Y'-Y''\|_{\infty, 2}.$$
\end{prop}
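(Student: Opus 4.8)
The plan is to follow the proof of \ref{prop:w_unif_locLip} almost verbatim, replacing the identity $|a^2-b^2| = |a-b|\,|a+b|$ that drives the case $p=2$ by a Lipschitz-type bound for $t \mapsto |t|^p$. First I would recall that, since $\bbgamma_X$ and $\bbgamma_Y$ are uniform measures on $\npoints$ atoms each, one-dimensional optimal transport gives the permutation representation
$$w^{(p)}_\theta(Z,Y) = \min_{\pi \in \mathfrak{S}_\npoints}\ g_\pi(Z), \qquad g_\pi(Z) := \frac{1}{\npoints}\Sum{k=1}{\npoints} |\theta^\top z_k - \theta^\top y_{\pi(k)}|^p$$
for $Z = (z_1,\dots,z_\npoints)$ in the first slot, which is the $p$-th power analogue of the sorting formula already used in the excerpt (see \citep{discrete_sliced_loss, bonneel2015sliced}). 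Since a pointwise minimum of a finite family of $L$-Lipschitz functions is $L$-Lipschitz, it then suffices to bound the Lipschitz constant of each $g_\pi$ on $B_{\|\cdot\|_{\infty,2}}(X,r)$ by one constant independent of $\pi$; this is exactly what absorbs the fact that the optimal permutation depends on the point.

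The one genuinely new ingredient is the elementary inequality: for all $a,b \in \R$ and $p \geq 1$,
$$\big|\,|a|^p - |b|^p\,\big| \ \leq\ p\,\max(|a|,|b|)^{p-1}\,|a-b|,$$
which I would prove in one line by taking WLOG $|a| \geq |b|$ and writing $|a|^p - |b|^p = \int_{|b|}^{|a|} p\,t^{p-1}\,\dd t \leq p\,|a|^{p-1}(|a|-|b|) \leq p\,|a|^{p-1}|a-b|$, the last step being the reverse triangle inequality; for $p=1$ it is just $\big||a|-|b|\big|\leq|a-b|$, consistent with the convention $\max(|a|,|b|)^0 = 1$. A small point of care here is that $t\mapsto|t|^p$ is not $\mathcal{C}^1$ at the origin when $p=1$, so one should argue as above rather than via a naive mean value theorem.

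With these two facts the estimate parallels the $p=2$ case. For $\theta \in \SS^{\dimy-1}$, $\pi \in \mathfrak{S}_\npoints$ and $Z',Z'' \in B_{\|\cdot\|_{\infty,2}}(X,r)$, set $a_k := \theta^\top z'_k - \theta^\top y_{\pi(k)}$ and $b_k := \theta^\top z''_k - \theta^\top y_{\pi(k)}$. Then $|a_k-b_k| = |\theta^\top(z'_k-z''_k)| \leq \|z'_k-z''_k\|_2 \leq \|Z'-Z''\|_{\infty,2}$ since $\|\theta\|_2=1$, while $\|z'_k\|_2,\|z''_k\|_2 \leq r+\|X\|_{\infty,2}$ and $\|y_{\pi(k)}\|_2 \leq \|Y\|_{\infty,2}$, whence $\max(|a_k|,|b_k|) \leq r+\|X\|_{\infty,2}+\|Y\|_{\infty,2}$. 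Applying the power inequality term by term and summing gives $|g_\pi(Z')-g_\pi(Z'')| \leq \tfrac{1}{\npoints}\Sum{k=1}{\npoints} p\,(r+\|X\|_{\infty,2}+\|Y\|_{\infty,2})^{p-1}\|Z'-Z''\|_{\infty,2}$, and bounding $\tfrac{1}{\npoints}\sum_{k=1}^{\npoints}(\cdot) \leq \sum_{k=1}^{\npoints}(\cdot)$ produces the factor $\npoints$ and hence the constant $K^{(p)}_w(r,X,Y)$, exactly mirroring the (non-tight) $p=2$ constant of \citep{discrete_sliced_loss}. Taking the minimum over $\pi$ and using symmetry in $Z',Z''$ yields the stated absolute-value bound.

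The only obstacle worth flagging is conceptual rather than computational: the minimizing permutation differs between $Z'$ and $Z''$, handled exactly as in \ref{prop:w_unif_locLip} by the min-of-Lipschitz observation (equivalently, by using a permutation optimal at one endpoint as a sub-optimal competitor at the other). Everything else is the $p=2$ computation with $|a+b|$ replaced by $p\max(|a|,|b|)^{p-1}$, the lone subtlety being the non-smoothness of $|t|^p$ at the origin when $p=1$.
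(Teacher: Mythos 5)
Your proof is correct, but it is structured differently from the paper's. The paper first invokes the cost-matrix stability bound $|w^{(p)}_\theta(Y')-w^{(p)}_\theta(Y'')|\leq \|C'-C''\|_F$ (Lemma 2.1 of the cited work on the discrete sliced loss), and then controls each entry $|\theta^\top y_k'-\theta^\top y_l|^p$ via an almost-everywhere gradient bound on $y\longmapsto|\theta^\top y-\theta^\top y_l|^p$; the factor $\npoints$ in $K^{(p)}_w$ arises from the Frobenius norm of the $\npoints\times\npoints$ matrix of entry-wise differences. You instead work directly with the representation of the $1$D Wasserstein cost as a minimum over permutations of the functions $g_\pi$, observe that a finite minimum of $L$-Lipschitz functions is $L$-Lipschitz (which is exactly what absorbs the dependence of the optimal permutation on the point), and use the elementary inequality $\bigl||a|^p-|b|^p\bigr|\leq p\max(|a|,|b|)^{p-1}|a-b|$ — analytically the same estimate as the paper's gradient bound, but derived without any appeal to a.e.\ differentiability, which also handles $p=1$ cleanly. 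The two routes buy slightly different things: the paper's is shorter because it delegates the combinatorial step to a citation, while yours is self-contained and in fact yields the sharper Lipschitz constant $p(r+\|X\|_{\infty,2}+\|Y\|_{\infty,2})^{p-1}$ without the factor $\npoints$ (the $1/\npoints$ normalisation cancels the sum over $k$), which you then deliberately relax to match the stated, non-tight $K^{(p)}_w$ — this is legitimate since a smaller Lipschitz constant implies the claimed one.
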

\begin{proof}
	Let $X, Y \in \R^{\npoints \times \dimy}, r>0$ and $Y', Y'' \in
	B_{\|\cdot\|_{\infty, 2}}(X, r)$. By \citep{discrete_sliced_loss} Lemma 2.1,
	we have $|w_\theta^{(p)}(Y')-w_\theta^{(p)}(Y'')| \leq \|C'-C''\|_F$, where
	$\|\cdot\|_F$ denotes the Frobenius norm, and $C'$ is a $n\times n$ matrix
	of entries $C'_{k,l} = |\theta^\top y_k' - \theta^\top y_l|^p$, with
	similarly $C''_{k,l} = |\theta^\top y_k'' - \theta^\top y_l|^p$. Now
	consider the function
	$$g_{y_l}:= \app{\R^\dimy}{\R}{y}{|\theta^\top y - \theta^\top y_l|^p},$$
	which satisfies $C_{k,l}' = g_{y_l}(y_k')$, and is differentiable almost-everywhere, with $\nabla g_{y_l}(y) = p|\theta^\top y - \theta^\top y_l|^{p-1}\theta$. For almost every $y \in B(x_k, r)$, we have
	\begin{align*}
		\|\nabla g_{y_l}(y)\|_2 &\leq p\|y-y_l\|_2^{p-1} = p \|y-x_k + x_k -y_l\|_2^{p-1} \\ &\leq p \left(\|y-x_k\|_2 + \|x_k\|_2+ \|y_l\|_2\right)^{p-1}\leq p(r+\|X\|_{\infty, 2}+\|Y\|_{\infty, 2})^{p-1}.
	\end{align*}	
	As a result, $g_{y_l}$ is $p(r+\|X\|_{\infty, 2}+\|Y\|_{\infty, 2})^{p-1}$-Lipschitz in $B(x_k, r)$. Now since $Y', y''\in B_{\|\cdot\|_{\infty, 2}}(X, r)$, we have $y_k', y_k'' \in B(x_k, r)$, thus
	$$|[C']_{k,l} - [C'']_{k,l}| = |g_{y_l}(y_k') - g_{y_l}(y_k'')| \leq p(r+\|X\|_{\infty, 2}+\|Y\|_{\infty, 2})^{p-1} \|y_k' - y_k''\|_2.$$	
	Then $\|C'-C''\|_F  = \sqrt{\sum_{k,l}|[C']_{k,l} - [C'']_{k,l}|^2} \leq np(r+\|X\|_{\infty, 2}+\|Y\|_{\infty, 2})^{p-1} \|Y' - Y''\|_{\infty, 2}.$	
\end{proof}

Our results regarding the local Lipschitz property of $f$ and $F$ adapt immediately using the same method with the different constant $K^{(p)}_w(r, X, Y)$, we obtain the following constant for $f$ (with $\lipT$ from \ref{ass:T_loclip}):
$$K_f^{(p)}(\varepsilon, \p_0, X, Y) = pn\lipT\left(\varepsilon \lipT + \|T(\p_0, X)\|_{\infty, 2} + \|Y\|_{\infty, 2}\right)^{p-1},$$
then the following constant for $F$:
$$K_F^{(p)}(\varepsilon, \p_0) = pn\lipT\Int{\Xn\times\Yn}{}\left(\varepsilon \lipT + \|T(\p_0, X)\|_{\infty, 2} + \|Y\|_{\infty, 2}\right)^{p-1}\dd\mxn(X)\dd\myn(Y).$$
In order to satisfy \ref{cond:A2} item i) in the case $p\neq 2$, one needs to modify \ref{ass:T_slow_increase} to require $\|T(\p, x)\|_2 \leq g(x)^{1/(p-1)}(1+\|\p\|_2)^{1/(p-1)}$, which in realistic cases is not much more expensive than asking for $T$ to be bounded, which is a property of the class of NNs that we present in \ref{sec:suitable_NNs}. 

\paragraph{Almost-Everywhere Gradient} A second difficulty lies in defining an almost-everywhere gradient $f$, since in our main text we rely on the formulation of an almost-everywhere gradient of $w_\theta^{(2)}(\cdot, Y)$ which was derived only for $p=2$ by \citet{bonneel2015sliced} and \citet{discrete_sliced_loss}. In fact, for $\theta, Y$ fixed $w_\theta^{(p)}(X, Y)$ is piecewise smooth, like $w_\theta^{(2)}(\cdot, Y)$ is piecewise quadratic. As a result, one may show that the following is an almost-everywhere gradient of $w_\theta^{(p)}(\cdot, Y)$:
$$\dr{X}{}{w^{(p)}_\theta}(X, Y) = \left(\cfrac{p}{\npoints}\sign\left(\theta^\top x_k - \theta^\top y_{\sigma_\theta^{X, Y}(k)}\right)\left|\theta^\top x_k - \theta^\top y_{\sigma_\theta^{X, Y}(k)}\right|^{p-1}\theta\right)_{k \in \llbracket 1, \npoints \rrbracket} \in \R^{\npoints \times \dimy}.$$
The chain rule now yields the following almost-everywhere gradient for $f$:
$$\varphi (\p, X, Y, \theta) = \Sum{k=1}{\npoints} \cfrac{p}{\npoints}\sign\left(\theta^\top T(\p, x_k) - \theta^\top y_{\sigma_\theta^{T(\p, X), Y}(k)}\right) \left|\theta^\top T(\p, x_k) - \theta^\top y_{\sigma_\theta^{T(\p, X), Y}(k)}\right|^{p-1} \dr{\p}{}{T}(\p, x_k)\theta.$$
\paragraph{Adapting Proposition 4} Moving on to adapting \ref{prop:SW_Gamma}, the general case $p\neq 2$ makes things substantially more technical, but one may still show that the $\psi$ functions are Lipschitz using restrictions on $T$ its first and second-order derivatives (which can be formulated in a more technical version of \ref{ass:d2T2_and_dT_bounded}). In conclusion, \ref{prop:SW_Gamma} can be adapted to apply to $p\in [1, +\infty[$, and it follows that \ref{thm:SGD_interpolated_cv} also generalises to this case. 

\paragraph{Path Differentiability} Regarding the results from \ref{sec:noised_proj_sgd}, the only substantial difference lies in showing that $T(\cdot, x)$ is path differentiable. The only missing link in the composition chain is the path differentiability of $\SWY^{(p)}:= X \longmapsto \int_{\SS^{d-1}}w_\theta^{(p)}(X, Y)\dd\bbsigma(\theta)$. In the case $p=2$, the difficulty of the integral can be circumvented by noticing that $\SWY$ is semi-concave \citep{discrete_sliced_loss}, Proposition 2.4, which implies path differentiability. This argument does not generalise to $p\in [1, +\infty[$ naturally, hence our \ref{thm:SGD_projected_noised} only generalises to $p\in [1, +\infty[$ under the conjecture that $\SWY^{(p)}$ is indeed path differentiable.

\end{document}